\newcommand{\approach}{\textbf{\textit{BIS}}\xspace}
\newcommand{\answer}[2]{
  \begin{tcolorbox}[enhanced, 
  breakable,
  left=3mm,right=3mm,
    colback=gray!10, colframe=gray!80, boxrule=0pt,
    borderline west={4pt}{0pt}{gray!90},
    ]
    \textbf{Answer for RQ#1:}
    #2
    \end{tcolorbox}
}
\begin{document}

\title{Importance Sampling is All You Need: Predict LLM's performance on new benchmark by reusing existing benchmark.
}
\author{Junjie Shi}
\email{CHUNKIT001@e.ntu.edu.sg}
\affiliation{%
  \institution{Nanyang Technological University}
  \country{Singapore}
}

\author{Wei Ma}
\email{weima@smu.edu.sg}
\authornote{Corresponding author}
\affiliation{%
  \institution{Singapore Management University}
  \country{Singapore}
}

\author{Shi Ying}
\email{yingshi@whu.edu.cn}
\affiliation{%
  \institution{Wuhan University}
  \city{Wuhan}
  \state{Hubei}
  \country{China}
}

\author{Lingxiao Jiang}
\email{lxjiang@smu.edu.sg}
\affiliation{%
  \institution{Singapore Management University}
  \country{Singapore}
}

\author{Yang Liu}
\email{yangliu@ntu.edu.sg}
\affiliation{%
  \institution{Nanyang Technological University}
  \country{Singapore}
  }

\author{Bo Du}
\email{dubo@whu.edu.cn}
\affiliation{%
  \institution{Wuhan University}
  \city{Wuhan}
  \state{Hubei}
  \country{China}
}

\renewcommand{\shortauthors}{J. Shi and W. Ma et al.}

\begin{abstract}
With the rapid advancement of large language models (LLMs), code generation has become a key benchmark for evaluating LLM capabilities. However, existing benchmarks face two major challenges: (1) the escalating cost of constructing high-quality test suites and reference solutions, and (2) the increasing risk of data contamination, which undermines the reliability of benchmark-based evaluations.

In this paper, we propose \approach{}, a prompt-centric evaluation framework that enables ground-truth-free prediction of LLM performance on code generation tasks. Rather than executing generated code, \approach{} estimates performance metrics by analyzing the prompt distribution alone. Built on importance sampling theory and implemented using Importance Weighted Autoencoders~(IWAE), our method reweights samples from existing annotated benchmarks to estimate performance on new, unseen benchmarks. To stabilize the estimation, we introduce weight truncation strategies and compute marginal expectations across the fitted distributions. \approach{} serves as a complementary tool that supports benchmark development and validation under constrained resources, offering actionable and quick feedback for prompt selection and contamination assessment.

We conduct extensive experiments involving 8,000 evaluation points across 4 CodeLlama models (7B–70B) and 9 diverse benchmarks. Our framework achieves an average absolute prediction error of 1.1\% for code correctness scores, with best- and worst-case errors of 0.3\% and 1.9\%, respectively. It also generalizes well to other metrics, attaining average absolute errors of 2.15\% for pass@1. These results demonstrate the reliability and broad applicability of \approach{}, which can significantly reduce the cost and effort of benchmarking LLMs in code-related tasks. %
 \end{abstract}

\begin{CCSXML}
<ccs2012>
 <concept>
  <concept_id>00000000.0000000.0000000</concept_id>
  <concept_desc>Do Not Use This Code, Generate the Correct Terms for Your Paper</concept_desc>
  <concept_significance>500</concept_significance>
 </concept>
 <concept>
  <concept_id>00000000.00000000.00000000</concept_id>
  <concept_desc>Do Not Use This Code, Generate the Correct Terms for Your Paper</concept_desc>
  <concept_significance>300</concept_significance>
 </concept>
 <concept>
  <concept_id>00000000.00000000.00000000</concept_id>
  <concept_desc>Do Not Use This Code, Generate the Correct Terms for Your Paper</concept_desc>
  <concept_significance>100</concept_significance>
 </concept>
 <concept>
  <concept_id>00000000.00000000.00000000</concept_id>
  <concept_desc>Do Not Use This Code, Generate the Correct Terms for Your Paper</concept_desc>
  <concept_significance>100</concept_significance>
 </concept>
</ccs2012>
\end{CCSXML}

\ccsdesc[500]{Do Not Use This Code~Generate the Correct Terms for Your Paper}
\ccsdesc[300]{Do Not Use This Code~Generate the Correct Terms for Your Paper}
\ccsdesc{Do Not Use This Code~Generate the Correct Terms for Your Paper}
\ccsdesc[100]{Do Not Use This Code~Generate the Correct Terms for Your Paper}

\keywords{Code Generation, Large Language Models, Importance Sampling, Benchmarking, Prompt Engineering}

\maketitle
\pagestyle{plain}

\section{Introduction}
\label{sec:intro}
Large Language Models (LLMs) have demonstrated exceptional capabilities across various software engineering tasks, with code generation being a particularly notable example. As coding becomes one of the most critical benchmarks for assessing the performance of LLMs, it is essential to evaluate their coding abilities rigorously~\cite{chen2024survey}. To this end, numerous benchmarks have been developed to measure LLMs' coding proficiency, including SWE-Bench~\cite{du2025swe}, HumanEval~\cite{chen2021evaluating}, and BigCodeBench~\cite{zhuo2024bigcodebench}.
Despite their utility, current LLM evaluation benchmarks face two major challenges:

\textbf{High Benchmark Development Costs}: Constructing reliable benchmarks requires significant manual effort, especially for developing detailed test suites and reference solutions. For example, BigCodeBench~\cite{zhuo2024bigcodebench} required the manual creation of over 5,000 tests. Attempts to automate test suite generation have been inadequate, with current automated methods achieving only around 39.2\% accuracy~\cite{jain2024testgeneval}.

\textbf{Risk of Data Contamination}: Publicly available benchmarks and test suites can unintentionally become part of the LLM training datasets directly or indirectly, leading to artificially inflated performance over time and undermining benchmark reliability.

Motivated by these limitations, we introduce a novel evaluation paradigm (\approach, Prompt Importance Sampling) inspired by a previously overlooked insight: under fixed trained-well LLM parameters and evaluation criteria, prompt distributions inherently determine the LLM observed performance. By leveraging this relationship, we propose an evaluation framework that can estimate LLM performance by analyzing prompt distributions alone—without executing generated code or relying on costly ground-truth solutions.

To the best of our knowledge, this is the first work to predict LLM code generation performance without ground-truth execution. Our framework is grounded in importance sampling, a statistical technique widely used in off-policy reinforcement learning~\cite{fujimoto2019off,munos2016safe,uehara2022review}, which allows us to estimate expectations under a target distribution using samples from a different, known distribution.

In our setting, this means estimating the expected performance of an LLM on a new set of prompts by reweighting existing prompts from known benchmarks. We achieve this by modeling prompt distributions using Importance Weighted Autoencoders~(IWAE), which are well-suited for capturing complex, multimodal distributions~\cite{burda2015importance}. By learning rich latent representations of prompt distributions, IWAE enables effective approximation of the importance weights, making it possible to reuse prior benchmark data to predict performance on novel code generation tasks.

Our method complements ground-truth-based evaluation by alleviating two key challenges: reducing benchmark construction costs and mitigating data contamination risks.
First, to mitigate high benchmark development costs, \approach{} enables early-stage performance estimation by transferring knowledge from existing, fully annotated benchmarks. This allows benchmark designers to assess the value of candidate tasks before committing to expensive test suite construction.
Second, to help reduce the risk of data contamination, \approach{} provides an indirect signal of model familiarity with the prompt distribution. By analyzing discrepancies between expected and predicted performance patterns, it can help flag tasks that may overlap with training data, even in the absence of ground-truth execution.
This complementary role positions \approach{} as a valuable tool for guiding benchmark design and validation, especially when operating under limited resources or concerns about data leakage.

In theory, we analyze why \approach\ is effective and how it provides an unbiased estimation of LLM performance on the target benchmark. Furthermore, we discuss the practical feasibility of our approach. Additional details are provided in \autoref{sec:unbiased}.
We conducted extensive experiments involving approximately 8,000 data points across 4 LLMs (CodeLlama models ranging from 7B to 70B parameters) and 9 diverse benchmarks. Results demonstrate that our framework achieves an average prediction error as low as 0.9\% on code correctness metrics, outperforming baseline distribution modeling methods. Additionally, our framework accurately generalizes to other crucial code-related metrics, including maintainability and security scores, validating its broad applicability.

In summary, the contributions of this paper include:
\begin{enumerate}
    \item Introducing the theoretical formalization explicitly linking prompt distributions to LLM performance, providing a rigorous conceptual foundation for prompt-based evaluation methods.
    \item Proposing the first prompt-centric, ground-truth-free evaluation framework, \approach, tailored specifically for code-generation tasks, effectively addressing high benchmark development costs and data contamination risks.
    \item Innovatively integrating importance sampling with IWAE, empirically validated through comprehensive experiments demonstrating improved accuracy, interpretability, and robustness.
\end{enumerate}

The remainder of the paper is organized as follows: \autoref{sec:related} discusses related work. \autoref{sec:study} presents our methodology. \autoref{sec:evaluation} reports the experimental evaluation. \autoref{sec:discussion} discusses implications and limitations. \autoref{sec:conclusion} concludes.

\section{Related work}
\label{sec:related}
\subsection{{Code Benchmark}}
So far, benchmarking for LLMs in code generation now encompasses a wide range of scenarios, including smart contracts\cite{wijayakoon2025legal}, realistic environment settings\cite{du2025swe}, and large-scale codebases\cite{jimenez2023swe}. These benchmarks typically comprise at least two core components: (1) the code evaluation prompt, and (2) the test suite. Many benchmarks additionally incorporate reference solutions, problem sources, and other metadata. During evaluation, the LLM receives the code evaluation prompt as input and generates an output. This output undergoes formatting before being executed against the test suite. Consequently, a high-quality test suite constitutes a critical element of a valid and reliable code benchmark. The development of such test suites frequently necessitates manual curation by human experts, resulting in significant costs. Some alternative ground-truth-free methodologies has been proposed in other domains, such as benchmark-free scoring employing human or LLM-based reviewers. But they are unsuitable for code tasks given they may introduce extra bias and substantial cost. 

\subsection{Importance Sampling}
Importance sampling\cite{tokdar2010importance} is a Monte Carlo method\cite{james1980monte} designed to approximate expectations under a target distribution that is difficult to sample from directly, achieved by reweighting samples drawn from an accessible proposal distribution. Within this framework, two distinct distributions are involved. First, the target distribution $p(x)$ is intractable to sample from directly. Second, the proposal distribution $q(x)$ is from which samples can be readily generated. 
The objective of importance sampling is to estimate the expectation of a function $f(x)$ with respect to the target distribution $p(x)$. This is formally expressed as: $\mathbb{E}_{x \sim p}[f(x)] = \int f(x)p(x)dx $.
Importance Sampling estimates the expectation under a target distribution by weighting samples drawn from a proposal distribution. This process is formalized as follows, where $x_i$denotes a sample drawn from $q$: $\mathbb{E}_{x \sim p}[f(x)] \approx \frac{1}{N} \sum_{i=1}^{N} f(x_i) \cdot \frac{p(x_i)}{q(x_i)}, \quad x_i \sim q(x)$.

As shown in the equation above, the importance weight for sample $x_i$ is obtained by dividing the marginal probability of $x_i$ under the target distribution by its marginal probability under the proposal distribution. 
Within reinforcement learning, importance sampling is extensively utilized in scenarios characterized by data scarcity or challenging sampling conditions\cite{levine2020offline}. When direct sampling from the target environment proves difficult or costly, trajectories generated by a behavioral policy may be employed for policy evaluation. By adjusting these trajectories through importance sampling weights, the performance of the target policy can be estimated without altering the policy itself. This technique offers several distinct advantages:

\textit{Distributional Agnosticism}: Importance sampling does not rely on a pre-specified family of probability distributions. It requires no assumption that the data adheres to a specific parametric form, thereby accommodating a  broad spectrum of distribution types.

\textit{Mathematical Grounding and Interpretability}: By computing weight explicitly, importance sampling provides inherent interpretability and is unbiased estimation\cite{tokdar2010importance}.

\subsection{Variational Inference and Variational Autoencoder}
Variational inference(VI)~\cite{blei2017variational} is a technique for approximating complex posterior distributions given observed data. Computationally, VI achieves this approximation by maximizing the evidence lower bound (ELBO) on the marginal likelihood of the target distribution. Fundamentally, this approach minimizes the Kullback-Leibler (KL) divergence\cite{van2014renyi} between the approximating distribution and the true target distribution, as formalized below:
$ \log p(x) = \mathcal{L}(q) + D_{KL}(q(z; \theta) \| p(z|x))$,
where $ p $ denotes the target posterior distribution, while $ q $ represents the approximating distribution. The parameters of the $ q $ distribution are denoted by $ \theta $, and $ z $ refers to a sample drawn from the latent variable space. The observed data is represented by $ x $, and $ \mathcal{L}(q) $ stands for the ELBO, which serves as a variational objective function.

Variational Autoencoder (VAE)~\cite{pinheiro2021variational}  integrates variational inference with deep learning, commonly employed for unsupervised learning tasks. Its architecture comprises an encoder layer, which maps input data to a latent space; a sampling layer, responsible for drawing samples from this latent space; and a decoder layer, which reconstructs the input from the sampled latent representations. The VAE uses ELBO as reconstruction loss and uses KL divergence as a regularization term.

IWAE (Importance Weighted Autoencoder), a variant of VAE, achieves a tighter ELBO by performing multiple latent space samplings and weighting the importance of each sample. The ELBO constructed by IWAE is given below:
\begin{equation}
ELBO_{IWAE} = \mathbb{E}_{z_1,\ldots,z_K \sim q(z|x)} \left[ \log \left( \frac{1}{K} \sum_{k=1}^{K} \frac{p(x,z_k)}{q(z_k|x)} \right) \right]
\end{equation}.

Where $K$ is the number of samples, $z_k \sim q(z|x)$ are $K$ samples drawn from the approximate posterior distribution, $x$ is the observed data, and $q(z_k|x)$ is the variational approximation produced by IWAE . The term $p(x,z_k)$ represents the joint distribution of the observed data and the sampled latent variables. As $K$ increases, the ELBO  of IWAE approaches the log-likelihood of the target distribution more closely. When $K=1$, IWAE reduces to the standard VAE.

\section{Study Design}
\label{sec:study}
\subsection{Motivation}
Our motivation originates from a basic yet under-explored observation: the performance of LLMs in code generation tasks is highly sensitive to variations in prompt wording, structure, and contextual details. While it is widely recognized that prompts significantly influence the performance of code-generating LLMs, existing evaluation methodologies largely neglect leveraging prompt distributional characteristics. Instead, they heavily rely on executing generated code against manually crafted test suites—an approach inherently expensive and prone to data contamination issues if there is not quick feedback.

A natural yet unexplored insight from our formal analysis reveals that when the model and benchmark conditions are fixed, prompt distribution itself uniquely determines the expected model performance. While intuitive, this theoretical equivalence has been overlooked by prior works, which inherently assumed that explicit code execution and reference solutions are indispensable for accurate benchmarking.

Some of notations we use in this paper is shown in \autoref{tab:notation}.
Formally, consider an LLM as a probabilistic generative model $P_{LLM}(c|t)$. Given a fixed model and benchmark evaluation metrics $f_{b}$, the expected score for a given prompt distribution $P_{t}$ can be succinctly represented as:
\begin{equation}
\label{definition}
\mathbb{E}_{t \sim P_{t}}[score]=\int_{t}\int_{c} f_{b}(c) \cdot P_{LLM}(c|t) \cdot P_{t}(t) dcdt
\end{equation}

This equation reveals that, under fixed model parameters and evaluation setup, performance estimation fundamentally reduces to capturing the underlying prompt distribution $P_t$, which has not been previously exploited explicitly. Leveraging this insight, our work proposes an innovative, prompt-centric evaluation framework that eliminates the need for code execution and expensive ground-truth test suites.

\begin{table}[htbp]
\centering
\caption{Notation}
\label{tab:notation}
\resizebox{0.46\textwidth}{!}{
\begin{tabular}{@{}lc@{}}
\toprule
\textbf{Notation} & \textbf{Meaning} \\ 
\midrule
\texttt{t} & prompt \\
\texttt{c} & code \\
\texttt{b} & benchmark \\
$f_{b}$ & the evaluation function in benchmark \\
$P_{LLM}$ & the output distribution of LLM \\
$P_{t}$ & the output distribution of prompt \\
$P_{t}^{source}$ & the output distribution of source prompt set \\
$P_{t}^{target}$ & the output distribution of target prompt set \\
$E_{P_{t}}(score)$ & the expectation score of LLM on prompt distribution $P_{t}$\\
$\mathbf{t}_{\text{source}}^{(i)}$ & the i-th prompt from source prompt set \\
$\mathbb{IWAE}_{source}$ & The IWAE model trained with source prompt set \\
$\mathbb{IWAE}_{target}$ & The IWAE model trained with target prompt set \\
$\hat{\mathbf{y}}_{\text{target}}$& Prodiction score for target prompt set \\
\bottomrule
\end{tabular}
}
\end{table}

In practice, we use average values of all samples to estimate expected score of LLM on $P_{t}$.
 \begin{equation}
     \mathbb{E}_{t \sim P_{t}}{[score]} \approx \frac{1}{N}\sum_{i=1}^{N}f_{\text{b}}(\text{c}_i)
 \end{equation}
However, as noted previously, the test suite embodied by $f_{b}$ is often challenging to develop in practice. Consequently, for a given codebase, we may find that $f_{b}$ is difficult to obtain. Therefore, we return to the initial point. In \autoref{definition}, we note that although actual
implementations require testing suites for distinct code samples in practice, $f_{b}$  can theoretically be regarded as a deterministic and known function. On the other hand, we can assume LLM is also known. This holds because while the precise mechanism by which the LLM generates its probability distribution remains undisclosed, we can approximate this distribution through sampling. This implies that to obtain the expectation in Equation 4, we only need to compute the distribution corresponding to the prompts. Thus, we have successfully formalized the statement and proven our conjecture: by determining the distribution of prompts, we can derive the expected performance of the LLM on a given code benchmark.

We emphasize that while intuitive, this prompt-centric approach has remained unexplored in the literature. One possible reason for its neglect is the implicit assumption in prior works that explicit execution and verification through test suites are indispensable. However, as we demonstrate empirically, prompt distributions carry rich and sufficient information to reliably estimate model performance. Thus, our approach not only introduces significant efficiency improvements but also provides a novel evaluation methodology that addresses the growing challenges of benchmark cost and data contamination in code-generation tasks.

\subsection{Methodology}
Our approach builds directly upon the key observations established in the motivation: under a fixed LLM and evaluation metric, the expected model performance is entirely determined by the distribution of input prompts (Equation~\ref{definition}). Based on this, we aim to estimate the performance of an LLM on a new benchmark (the \emph{target prompt set, $P_t^{target}$}) without executing code, by reweighting existing performance data from an annotated \emph{source prompt set, $P_t^{source}$}.

To operationalize this idea, we adopt an importance sampling framework (\approach{}) as shown by \autoref{fig:framework}, which enables estimating expectations under a target distribution using samples drawn from a different, known source distribution. 

\approach{} takes both the prompt for a target code generation task and an LLM as inputs. It employs an \textbf{Embedding Module} to obtain the prompt embedding.
We model both the source and target prompt distributions using Importance-Weighted Autoencoders (IWAE)~\cite{burda2015importance}, a latent-variable model well-suited for approximating complex, multimodal distributions. The IWAE provides tractable estimates of the marginal likelihood of prompts under each distribution, from which importance weights can be derived.
The \textbf{IWAE Module} is responsible for modeling the distribution $P_{t}^{\text{target}}$ of target prompts by learning model $\mathbb{IWAE}_{target}$ from the feature vectors from \textbf{Embedding Module}. 

\begin{figure*}[]
  \centering
  \includegraphics[width=0.8\linewidth]{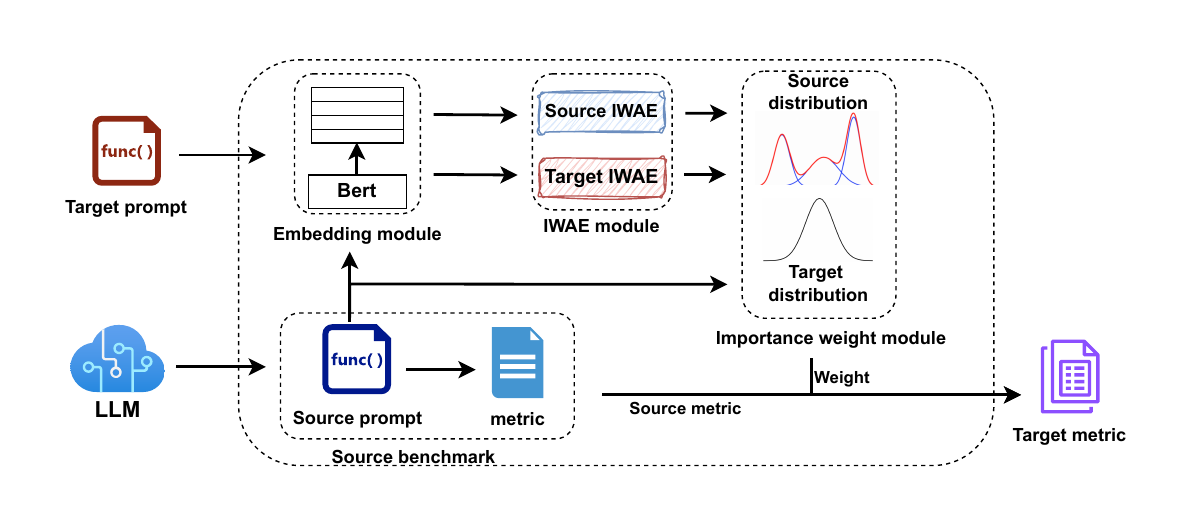}
  \vspace{-2em}
  \caption{Workflow of our framework, \approach{}.}
  \label{fig:framework}
\end{figure*}

Simultaneously, the other \textbf{IWAE Module} models the distribution $P_{t}^{\text{source}}$ of prompts by learning model $\mathbb{IWAE}_{source}$ from the source benchmark. Concurrently, the \textbf{pre-loaded source benchmark} within our framework is evaluated on the LLM, yielding test results $\mathcal{R}_{\text{source}} = \{r_1, r_2, ..., r_N\}$.

For each prompt sample $\mathbf{t}_{\text{source}}^{(i)}$ within the source benchmark, we infer an \textit{importance weight} $w_i$ using aforementioned two trained IWAE models. The importance weight $w_i$ can be obtained by the division of the marginal probability of $\mathbf{t}_{\text{source}}^{(i)}$ in $\mathbb{IWAE}_{target}$ and the marginal probability of $\mathbf{t}_{\text{source}}^{(i)}$ in $\mathbb{IWAE}_{target}$:
\begin{equation}
    w_{i} = \frac{\mathbb{IWAE}_{target}(\mathbf{t}_{\text{source}}^{(i)})}{\mathbb{IWAE}_{source}(\mathbf{t}_{\text{source}}^{(i)})}
\end{equation}

The final prediction score $\hat{\mathbf{y}}_{\text{target}}$ for the target code generation task prompt is then computed as the weighted output based on these importance weights:
\begin{equation}
\hat{\mathbf{y}}_{\text{target}} = \sum_{i=1}^{N} w_i \cdot f_{b}(P_{\text{LLM}}(\mathbf{t}_{\text{source}}^{(i)}))
\label{eq:weighted_prediction}
\end{equation}
where $P_{\text{LLM}}$ denotes the LLM's response function.

Specifically, our framework \approach{} comprises four modules:

\textbf{Embedding Module.} We employ a BERT model to extract high-dimensional embeddings from the target prompt as features. To ensure the extracted features are both expressive and concise, we utilize the embedding corresponding to the <CLS> token position within BERT as our feature representation.

\textbf{Source Benchmark Module.} This module contains a comprehensive code benchmark, incorporating a source prompt set from an alternative code task dataset along with their corresponding test suites and evaluation metrics. It also takes an LLM as input and outputs the LLM's performance metrics on the source prompts. 

\textbf{IWAE Module.} This module is responsible for fitting the posterior distribution over the features of the source prompts and target prompts, outputting corresponding IWAE models for the two prompt sets. We adopt the Importance Weighted Autoencoder (IWAE), a variant of  VAE, as our distribution fitting method. 

\textbf{Importance Weight Module.} This module takes as input two IWAE models and a source prompt set generated by the IWAE module. It computes an importance weight for each sample within the source prompt set. These weights are normalized and subsequently multiplied by the respective sample scores to yield the final prediction.

Within this module, the importance weight for each sample is calculated as the ratio of its marginal probability under the proposal distribution to its marginal probability under the target distribution, using the two IWAE models. To mitigate the potential instability arising from extreme weights—which can occur when high-probability regions of the proposal and target distributions are misaligned, leading to prediction dominance by a small number of samples—we employ weight clipping. This technique enhances the robustness of our predictions.

\subsection{Unbiased Estimation Analysis of \approach{}}
\label{sec:unbiased}
This section proves our framework is capable of providing unbiased estimation without implementing $f_{\text{b}}$ for prompts sampled from $P_{\text{t}}^{\text{target}}$. There is one key assumption for the following theorem.

\textbf{Assumption 1:}  The target distribution is absolutely continuous with respect to the source distribution. Specifically, for all prompts $t$ satisfying $P_{\text{t}}^{\text{target}}(t) > 0$, we have $P_{\text{t}}^{\text{source}}(t) > 0$.

\begin{theorem}
\label{lem:lemma1}
Given source prompt distribution $P_{\text{t}}^{\text{source}}$ and target prompt distribution $P_{\text{t}}^{\text{target}}$, under the above conditions, we can provide unbiased estimates for LLM's score on target prompt set $\mathbf{E}_{P^{\text{target}}_{\text{t}}}(f_{b} \cdot P_{\text{LLM}})$ using prompts sampled from $P_{\text{t}}^{\text{source}}$ instead of $P_{\text{t}}^{\text{target}}$.
\end{theorem}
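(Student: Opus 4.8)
The plan is to prove the theorem by a direct importance-sampling identity: rewrite the target expectation $\mathbf{E}_{P_{\text{t}}^{\text{target}}}(f_b \cdot P_{\text{LLM}})$ as an integral over prompts against the target density, multiply and divide by the source density, and recognize the result as a source expectation of a reweighted integrand. Concretely, I would start from Equation~\ref{definition} specialized to $P_t^{\text{target}}$, so that the quantity of interest is $\int_t \int_c f_b(c)\, P_{\text{LLM}}(c\mid t)\, P_{\text{t}}^{\text{target}}(t)\, dc\, dt$. Defining $g(t) := \int_c f_b(c)\, P_{\text{LLM}}(c\mid t)\, dc$ as the per-prompt expected score (which is a fixed, deterministic function once the LLM and $f_b$ are fixed, as argued in the Motivation subsection), the target score is simply $\int_t g(t)\, P_{\text{t}}^{\text{target}}(t)\, dt = \mathbf{E}_{t \sim P_{\text{t}}^{\text{target}}}[g(t)]$.

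The key algebraic step is then the reweighting: on the set $\{t : P_{\text{t}}^{\text{target}}(t) > 0\}$, Assumption~1 guarantees $P_{\text{t}}^{\text{source}}(t) > 0$, so the ratio $w(t) := P_{\text{t}}^{\text{target}}(t) / P_{\text{t}}^{\text{source}}(t)$ is well-defined there, and we may write
\begin{equation}
\mathbf{E}_{t \sim P_{\text{t}}^{\text{target}}}[g(t)] = \int_t g(t)\, \frac{P_{\text{t}}^{\text{target}}(t)}{P_{\text{t}}^{\text{source}}(t)}\, P_{\text{t}}^{\text{source}}(t)\, dt = \mathbf{E}_{t \sim P_{\text{t}}^{\text{source}}}\!\left[ w(t)\, g(t) \right].
\end{equation}
This shows the estimator $\hat{\mathbf{y}}_{\text{target}} = \frac{1}{N}\sum_{i=1}^N w(\mathbf{t}_{\text{source}}^{(i)})\, g(\mathbf{t}_{\text{source}}^{(i)})$ with $\mathbf{t}_{\text{source}}^{(i)} \sim P_{\text{t}}^{\text{source}}$ is unbiased, since $\mathbf{E}[\hat{\mathbf{y}}_{\text{target}}] = \mathbf{E}_{t \sim P_{\text{t}}^{\text{source}}}[w(t)g(t)]$ by linearity of expectation, matching the target score exactly. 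I would also note that replacing the inner integral $g(t)$ by the empirical average of $f_b$ over sampled completions $c \sim P_{\text{LLM}}(\cdot\mid t)$ preserves unbiasedness by the tower property, which ties the proof back to the practical estimator in Equations~(4)--(\ref{eq:weighted_prediction}).

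The step I expect to be the main obstacle — or at least the one requiring the most care — is making the change of measure rigorous: one must verify that the integrand is integrable so that Fubini/Tonelli applies when swapping the order of the $t$ and $c$ integrals, and that the division by $P_{\text{t}}^{\text{source}}(t)$ is only ever performed on the support of $P_{\text{t}}^{\text{target}}$, which is exactly what Assumption~1 (absolute continuity) secures; without it the ratio $w(t)$ could be undefined or the identity could fail on a set of positive target mass. A secondary subtlety worth flagging is that this establishes unbiasedness but not bounded variance — if $w(t)$ is heavy-tailed the estimator can still be practically unstable, which is precisely why the weight-clipping/truncation step is introduced in the Methodology; I would remark that clipping trades a small bias for reduced variance, so the clean unbiasedness result here is the idealized statement that motivates the design rather than a guarantee about the deployed pipeline.
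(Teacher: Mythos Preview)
Your proposal is correct and follows essentially the same route as the paper's proof: write the target expectation as the double integral from Equation~\ref{definition}, multiply and divide by $P_{\text{t}}^{\text{source}}(t)$ using Assumption~1 to justify the ratio, and recognize the result as a source-side expectation of the reweighted integrand. Your additional remarks on Fubini, the tower property for the inner Monte Carlo over completions, and the bias--variance tradeoff introduced by clipping go somewhat beyond what the paper spells out, but they are all sound and consistent with the framework.
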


\begin{proof}
We employ importance sampling to rewrite the target expectation in terms of the source distribution:
\begin{equation}
\label{proof:3.1}
    \begin{aligned}
        &\mathbf{E}_{P^{\text{target}}_{\text{t}}}(f_{b} \cdot P_{LLM} ) \\
        &= \int_{\text{t}} \int_{\text{c}} f_{\text{b}}(\text{c}) \cdot P_{\text{LLM}}(\text{c}|\text{t}) \, \cdot P^{\text{target}}_{\text{t}}(t)  \,d\text{c} d\text{t} \\
     &=\int_{\text{t}} \int_{\text{c}} f_{\text{b}}(\text{c}) \cdot P_{\text{LLM}}(\text{c}|\text{t}) \, \cdot P^{\text{source}}_{\text{t}}(t)   \cdot \frac{P^{\text{target}}_{\text{t}}(t)}{P^{\text{source}}_{\text{t}}(t)} \,d\text{c} d\text{t} \\
     &=\mathbf{E}_{P^{\text{source}}_{\text{t}}}(f_{b} \cdot P_{LLM} \cdot
     \frac{P^{\text{target}}_{\text{t}}(t)}{P^{\text{source}}_{\text{t}}(t)})
    \end{aligned}
\end{equation}

Assumption 1 ensures that the target dataset should have the statistical relationship to the source distribution, thereby keeping importance weights finite.
The second equality follows from this assumption, which ensures the density ratio is well-defined wherever $P^{\text{target}}_{\text{t}}(t) > 0$. The final equality reinterprets the integral as an expectation with respect to the source distribution, completing the importance sampling transformation.
\end{proof}

This result demonstrates that we only need prompts sampled from $P^{\text{source}}_{t}$, and there is no need to implement $f_{b}$ for prompts sampled from $P^{\text{target}}_{t}$. The unbiased estimator is given by:
$$\hat{\mu} = \frac{1}{n} \sum_{i=1}^{n} f_{b}(c^{(i)}) \cdot \frac{P^{\text{target}}_{\text{t}}(t^{(i)})}{P^{\text{source}}_{\text{t}}(t^{(i)})}$$
where $t^{(i)} \sim P^{\text{source}}_{\text{t}}$ and $c^{(i)} \sim P_{\text{LLM}}(\cdot|t^{(i)})$.

\subsubsection{\textbf{Practical Feasibility and Effect of Assumption~1.}}
While Assumption 1 requires certain overlap between source and target prompt distributions, this condition is difficult to fully meet in cross-domain or cross-lingual settings, potentially resulting in increased variance of importance weights. Nevertheless, \approach{} still offers the following unique advantages:

\begin{enumerate}[itemsep=2pt,leftmargin=*]
  \item \textbf{Cost-Effective Preliminary Screening.}  
        When evaluating multiple candidate benchmarks, \approach{} offers a coarse performance ranking at low computational cost (requiring only prompt embedding calculations). This enables researchers to prioritize limited resources toward the most promising directions, avoiding the high expense of blindly constructing test suites.
  \item \textbf{Quantitative Diagnosis of Distribution Shift.}  
        Through statistical characteristics of importance weights (e.g., weight variance, extreme value ratio), \approach{} quantifies the alignment between source and target distributions. This provides data-driven support for subsequent evaluation strategy selection: prompts exhibiting high weight variance necessitate either source dataset augmentation or hybrid evaluation approaches.
    \item \textbf{Significant Mitigation of Data Contamination Risks.}By eliminating reliance on complete test suites and reference answers, \approach{} inherently avoids the training data leakage issues plaguing traditional benchmarks. This holds substantial methodological significance in the era of large language models.
\end{enumerate}

When importance weights exhibit extreme distributions, it indicates severe violation of Assumption ~1. In such cases, we recommend: (1) augmenting the diversity of the source benchmark, (2) performing traditional evaluation on samples with anomalous weights, or (3) employing robust estimation via weight truncation.

\subsection{Quantitative Analysis: A formal analysis on error upper bound}
In this section, we conduct a theoretical analysis of our framework to obtain a quantitative understanding of its generalization ability. We formalize the problem setup, define key symbols, and derive the loss function under standard assumptions.

\paragraph{Problem Setup and Symbol Definitions}
We adopt the following assumptions and definitions:
\begin{enumerate}
    \item \textbf{VAE Distribution:} IWAE learns a distribution $\mathbb{IWAE}_{target}(\mathbf{x})$ to approximate true distribution of source prompt set $P_{t}^{source}$. Another IWAE model is used to approximate target distribution $\mathbb{IWAE}_{target}(\mathbf{x})$ from $P_{t}^{target}$, which is true distribution of target prompt set. 
    \item \textbf{Objective:} Estimate the expectation $\mu = \mathbb{E}_{\mathbf{x} \sim P_{t}^{source}}[f(\mathbf{x})]$, where $f(\cdot)$ is benchmark metric $\mu$ can be written as:
    \begin{equation}
    \label{equ:23}
    \mu = \frac{1}{n} \sum_{i=1}^{n} \frac{P_{t}^{target}(\mathbf{x}_i)}{P_{t}^{source}(\mathbf{x}_i)} f(\mathbf{x}_i).
    \end{equation}
    \item \textbf{Importance Sampling Estimator:} Given $n$ samples $\mathbf{x}_i \sim P_{t}^{source}$, the IS estimator is defined as:
    \begin{equation}
    \label{equ:22}
    \hat{\mu}_{\text{IS}} = \frac{1}{n} \sum_{i=1}^{n} \frac{\mathbb{IWAE}_{target}(\mathbf{x}_i)}{\mathbb{IWAE}_{source}(\mathbf{x}_i)} f(\mathbf{x}_i).
    \end{equation}
    \item \textbf{Prediction Error:} The absolute error (AE) of the estimator is:
    \begin{equation}
    \text{AE}(\hat{\mu}_{\text{IS}}) = \left|\hat{\mu}_{\text{IS}} - \mu\right|.
    \end{equation}
\end{enumerate}
Inspired by the concept of chi-square divergence, we employ a function 
$\varepsilon(x)$ to characterize the difference between the posterior distribution approximated by the IWAE and the actual distribution. Assuming sufficient proximity between the IWAE and the actual distribution with adequate training time and sample size, we subsequently derive an upper bound on the prediction error of our framework:
\begin{theorem}
\label{lem:lemma2}
 Let $\mathbb{IWAE}_{target} = P_{t}^{target}(1+\varepsilon_{target}(x))$, where $|\varepsilon_{target}(x)| \ll 1$ and  $\mathbb{IWAE}_{source} = P_{t}^{source}(1+\varepsilon_{source}(x))$, where $|\varepsilon_{source}(x)| \ll 1$.

 We have:  
 \begin{equation}
    \text{AE}(\hat{\mu}_{\text{IS}}) \leq \left|{\mu}( sup(\varepsilon_{source}(x)-\varepsilon_{target}(x))) \right|
    \end{equation}
\end{theorem}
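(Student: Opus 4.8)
The plan is to expand the definition of the estimator $\hat{\mu}_{\text{IS}}$, substitute the perturbation ansatz $\mathbb{IWAE}_{target} = P_{t}^{target}(1+\varepsilon_{target})$ and $\mathbb{IWAE}_{source} = P_{t}^{source}(1+\varepsilon_{source})$, and then isolate the difference $\hat{\mu}_{\text{IS}} - \mu$ as a sum of correction terms governed by $\varepsilon_{source}$ and $\varepsilon_{target}$. First I would write the perturbed weight ratio as
\begin{equation}
\frac{\mathbb{IWAE}_{target}(\mathbf{x}_i)}{\mathbb{IWAE}_{source}(\mathbf{x}_i)}
= \frac{P_{t}^{target}(\mathbf{x}_i)}{P_{t}^{source}(\mathbf{x}_i)} \cdot \frac{1+\varepsilon_{target}(\mathbf{x}_i)}{1+\varepsilon_{source}(\mathbf{x}_i)},
\end{equation}
and then use the first-order expansion $(1+\varepsilon_{target})/(1+\varepsilon_{source}) = 1 + (\varepsilon_{target}-\varepsilon_{source}) + O(\varepsilon^2)$, valid since $|\varepsilon_{source}|, |\varepsilon_{target}| \ll 1$. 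Dropping the quadratic remainder, the per-sample weight becomes the true weight times $\bigl(1 + (\varepsilon_{target}(\mathbf{x}_i) - \varepsilon_{source}(\mathbf{x}_i))\bigr)$.

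Next I would subtract $\mu$ from $\hat{\mu}_{\text{IS}}$ term by term: since $\mu = \frac{1}{n}\sum_i \frac{P_{t}^{target}(\mathbf{x}_i)}{P_{t}^{source}(\mathbf{x}_i)} f(\mathbf{x}_i)$, the difference is
\begin{equation}
\hat{\mu}_{\text{IS}} - \mu = \frac{1}{n}\sum_{i=1}^{n} \frac{P_{t}^{target}(\mathbf{x}_i)}{P_{t}^{source}(\mathbf{x}_i)} f(\mathbf{x}_i)\,\bigl(\varepsilon_{target}(\mathbf{x}_i) - \varepsilon_{source}(\mathbf{x}_i)\bigr).
\end{equation}
Taking absolute values, applying the triangle inequality, and bounding each factor $\varepsilon_{target}(\mathbf{x}_i) - \varepsilon_{source}(\mathbf{x}_i)$ by its supremum $\sup(\varepsilon_{target}(x) - \varepsilon_{source}(x))$ pulls that constant outside the sum, leaving $\bigl|\sup(\varepsilon_{target}-\varepsilon_{source})\bigr|$ times $\bigl|\frac{1}{n}\sum_i \frac{P_{t}^{target}(\mathbf{x}_i)}{P_{t}^{source}(\mathbf{x}_i)} f(\mathbf{x}_i)\bigr|$, and the latter sum is exactly $|\mu|$ by Equation~\ref{equ:23}. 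Reordering the two supremum arguments (a sign flip that is absorbed by the outer absolute value) yields the claimed bound $\text{AE}(\hat{\mu}_{\text{IS}}) \leq \bigl|\mu\,(\sup(\varepsilon_{source}(x) - \varepsilon_{target}(x)))\bigr|$.

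The main obstacle is rigor around the first-order truncation: strictly speaking, dropping the $O(\varepsilon^2)$ term makes this an approximate rather than exact inequality, so I would either state the result as holding to first order in $\|\varepsilon\|_\infty$, or carry the exact ratio $\frac{1+\varepsilon_{target}}{1+\varepsilon_{source}} - 1 = \frac{\varepsilon_{target}-\varepsilon_{source}}{1+\varepsilon_{source}}$ and absorb the denominator $1+\varepsilon_{source} \geq 1 - \|\varepsilon_{source}\|_\infty$ into a slightly looser constant. A secondary subtlety is that $\mu$ as defined in Equation~\ref{equ:23} is itself a finite-sample quantity rather than the true target expectation, so the statement is about the discrepancy between the IWAE-plugin estimator and the ideal importance-sampling estimator on the same samples — I would make that scope explicit so the bound is not misread as a bound on $|\hat\mu_{\text{IS}} - \mathbb{E}_{P_t^{target}}[f]|$. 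Both fixes are cosmetic and do not change the stated first-order bound.
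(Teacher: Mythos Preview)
Your proposal is correct and follows essentially the same route as the paper's own proof: substitute the perturbation ansatz into the weight ratio, linearize $(1+\varepsilon_{target})/(1+\varepsilon_{source}) \approx 1 + (\varepsilon_{target}-\varepsilon_{source})$, and then pull the supremum of $\varepsilon_{source}-\varepsilon_{target}$ outside so the remaining sum collapses to $\mu$. Your write-up is in fact tidier than the paper's (which renders the absolute value as $(\cdot)^{2\cdot\frac{1}{2}}$ and contains a ratio typo), and your explicit remarks about the first-order truncation and the finite-sample definition of $\mu$ name precisely the caveats that the paper leaves implicit in its $\approx$ step.
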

Proof.  We have
\begin{align}
\text{AE}(\hat{\mu}_{\text{IS}})  &= (\frac{1}{n} \sum_{i=1}^{n}( \frac{\mathbb{IWAE}_{target}(\mathbf{x}_i)}{\mathbb{IWAE}_{source}(\mathbf{x}_i)}  -   \frac{P^{target}_{pr}(\mathbf{x}_i)}{P^{source}_{t}(\mathbf{x}_i)} )f(\mathbf{x}_i))^{2*\frac{1}{2}} \\
     &=  (\frac{1}{n} \sum_{i=1}^{n} \frac{P^{source}_{t}(\mathbf{x}_i){}}{P^{target}_{t}(\mathbf{x}_i)}f(\mathbf{x}_{i})(1 - \frac{1+\varepsilon_{target}(\mathbf{x}_i))}{1+\varepsilon_{source}(\mathbf{x}_i))}))^{2*\frac{1}{2}}\\
     &\approx (\frac{1}{n} \sum_{i=1}^{n} \frac{P^{target}_{t}(\mathbf{x}_i)f(\mathbf{x}_i)}{P^{source}_{t}(\mathbf{x}_i)}( \varepsilon_{source}(\mathbf{x}_i)-\varepsilon_{target}(\mathbf{x}_i)))^{2*\frac{1}{2}} \\
     &\leq \left| \mu(sup(\varepsilon_{source}(\mathbf{x}_i)-\varepsilon_{target}(\mathbf{x}_i))\right| 
\end{align}
\qed

Since $\mu$ depends solely on the actual prompt distribution and the dataset, we consider it a constant. Consequently, our results indicate that when the learning of the Importance-Weighted Autoencoder (IWAE) sufficiently approximates the target distribution, \textit{the disparity in the fitting quality between the two IWAEs} dominates the upper bound of the mean squared error for importance sampling.

However, this conclusion holds under the prerequisite that \textit{both} IWAEs achieve sufficiently close approximations to their respective target distributions. Under such conditions, since both IWAEs exhibit high fitting quality, the disparity between them remains minimal. Nevertheless, individual \textit{out-of-distribution} samples may significantly elevate the loss upper bound. This occurs because the two IWAEs could produce markedly divergent responses to such samples. Therefore, we recommend maintaining diversity within the prompt sets in practical applications.
\section{Evaluation}
\label{sec:evaluation}
In this section, we conduct thorough experiments to validate the framework proposed in our study. We first describe our \textbf{experimental setup}, followed by the experimental design and results addressing the following five research questions:

\begin{enumerate}
    \item[\textbf{RQ1:}] Can our framework effectively predict model correctness metrics?
    
    \item[\textbf{RQ2:}] Can the predictive accuracy of our framework be generalized to other code-related metrics?
    
    \item[\textbf{RQ3:}] Do alternative methods achieve superior performance compared to our framework?
    
    \item[\textbf{RQ4:}] How do other factors (e.g., embedding dimensionality, prompt set size) influence the performance of our framework?
\end{enumerate}

\subsection{Setup}
To obtain high-quality and sufficient data for our experiments, we collected approximately 8,528 data points encompassing \textbf{4 models} and \textbf{9 benchmarks}. 
Considering the potential impact of data contamination on our experiments, we employed open-source models released before the publication of these benchmarks, deliberately excluding all closed-source models. Furthermore, to mitigate the risk that low scores of the chosen open-source models on these two code benchmarks would render most problems inconsequential to the final results, we utilized the CodeLlama\cite{roziere2023code} family for our experiments. It is worth noting that although models within this family share a common lineage, architectural differences exist among them, including variations in the number of layers, hidden dimensions, attention mechanisms, and training data.
The evaluation was conducted on the \texttt{CodeLlama} family of models, specifically including:
\texttt{CodeLlama-7B}, \texttt{CodeLlama-13B}, \texttt{CodeLlama-34B}, and \texttt{CodeLlama-70B}.

As shown in \autoref{tab:benchmark_stats}, the benchmark suite comprises:
BigCodeBench, HumanEval, EvoEval\cite{xia2024top}, which integrates 7 sub-benchmarks.

\begin{table}[]
\centering
\caption{Number of data points per benchmark}
\label{tab:benchmark_stats}
\scalebox{0.8}{
\begin{tabular}{@{}lc@{}}
\toprule
\textbf{Benchmark} & \textbf{Datapoints} \\ 
\midrule
\texttt{BigCodeBench} & 1, 140 \\
\texttt{HumanEval} & 164 \\
\texttt{EvoEval\_difficult} & 100 \\
\texttt{EvoEval\_creative} & 100 \\
\texttt{EvoEval\_subtle} & 100 \\
\texttt{EvoEval\_combine} & 100 \\
\texttt{EvoEval\_tool\_use} & 100 \\
\texttt{EvoEval\_verbose} & 164 \\
\texttt{EvoEval\_concise} & 164 \\
\midrule
\textbf{Total} & \textbf{2, 132} \\
\bottomrule
\end{tabular}}
\end{table}

To mitigate class imbalance across datasets and better approximate real-world application scenarios, we merged HumanEval with EvoEval to form the \textbf{Evo} dataset comprising 992 samples. Concurrently, BigCodeBench was maintained as a standalone \textbf{bigcode} dataset containing 1,140 samples. In subsequent experiments, for each large language model  evaluated, we implemented a reciprocal cross-prediction framework between these datasets: 
\begin{itemize}
    \item Performance on \emph{Evo} predicts performance on \emph{bigcode}
    \item Performance on \emph{bigcode} predicts performance on \emph{Evo}
\end{itemize}
This bidirectional evaluation protocol was systematically applied across all models. For multiple indicators spanning diverse value ranges, we employ min-max normalization to standardize all measurements within the unit interval $[0,1]$. After normalization, the average score of the LLM on the target prompt set is regarded as our prediction target, while the prediction error is calculated by subtracting this prediction target from our weighted score. To collect the data, we rented 8 server with L20 GPUs with the bill 280\$.

\subsection{RQ1: Predict Model Correctness
Metrics}

To represent correctness of code generated by LLM, we employ the \textbf{CodeBLEU score}~\cite{ren2020codebleumethodautomaticevaluation}, which integrates multiple similarity measures between the LLM-generated code and reference solutions. 
We conduct experiments to preliminarily validate the performance of our framework in predicting code correctness metrics. 
As shown in Table~\ref{tab:results}, our results demonstrate that when training on the \textit{BigCode} dataset to predict the \textit{Evo} dataset, 
the framework achieves an average absolute error rate of \textbf{1.05\%}.Conversely, when training on the \textit{Evo} dataset to predict the \textit{BigCode} dataset, it achieves an average absolute error rate of \textbf{1.15\%} .
We also computed the pass@1 metric. Due to computational constraints, this metric was calculated exclusively on CodeLlama-7B. For all benchmarks used in our study, we executed CodeLlama-7B ten times to determine the pass@1 score. The result is shown in \autoref{tab:passat1}.
These findings substantiate the robust predictive performance of our approach.

\begin{table}[]
\centering
\caption{Error rates of CodeBLEU score across datasets and language models}
\label{tab:results}
\scalebox{0.8}{
\begin{tabular}{llr}
\toprule
\textbf{Source dataset} & \textbf{LLM} & \textbf{Error}  \\
\midrule
BigCode                 & codellama-70b &  0.011 \\
                       & codellama-34b & 0.007 \\
                       & codellama-13b & 0.003 \\
                       & codellama-7b  &  -0.011 \\

Avg absolute Error& &\textbf{0.008}\\
\midrule
Evo                     & codellama-70b &  -0.016 \\
                       & codellama-34b &  -0.005 \\
                       & codellama-13b &  -0.015 \\
                       & codellama-7b  & -0.019 \\
Avg absolute Error& &\textbf{0.014}\\
\bottomrule
\end{tabular}}
\end{table}

\begin{table}[]
\centering
\caption{Error rates of pass@1 score on CodeLlama-7b}
\label{tab:passat1}
\scalebox{0.8}{
\begin{tabular}{llr}
\toprule
\textbf{Source dataset} & \textbf{BigCode} & \textbf{Evo}  \\
\midrule
\textbf{Error}  & -0.022 & 0.021\\

\bottomrule
\end{tabular}}
\end{table}

\answer{1}{Our framework can effectively predict LLM's performance on CodeBLEU and pass@1 metric with low absolute error.}

\subsection{RQ2: Evaluation On Other Metric}
In addition to accuracy metrics, recognizing the need for a more comprehensive evaluation of code generated by LLMs, several recent studies have begun incorporating alternative metrics that capture broader aspects of code quality—such as maintainability index~\cite{welker2001software} and security-related scores. In this work, we aim to extend the scope of importance sampling beyond simple accuracy measures to encompass these additional dimensions. We categorize the target metrics into two groups. The first group consists of \textbf{semantic-level} metrics, including cyclomatic complexity~\cite{ebert2016cyclomatic}(CC), and security scores (SS), which are derived from the semantic properties of the generated code. We consider these metrics to be particularly critical in evaluating the practical utility and robustness of LLM-generated code.

The metrics are formally defined as follows:
1). $\mathrm{CC} = E - N + 2P$,
where $E$ is edges, $N$ is nodes in control flow graph, and $P$ is connected components.
2). $\mathrm{SS} = \max(100 - {\sum_{i=1}^{n}w_i \cdot c_i},0)$,
where $v_i$ represents vulnerability severity and $c_i$ its confidence. In our framework, we choose bandit~\cite{kapustin2023static} as out static code auditory toolkit. The confidence and severity levels along with their corresponding weights are shown in \autoref{tab:weight}.

\begin{table}[ht]
\centering
\caption{Confidence level, severity and their corresponding weights in our framework}
\label{tab:weight}
\begin{tabular}{ll|ll}
\toprule
\textbf{Confidence level} & \textbf{weight} & \textbf{Severity} & \textbf{weight} \\
\midrule
High &  1 & High & 50    \\
Medium & 0.6  &Medium  & 30  \\
  Low  & 0.2 & Low & 10 \\

\bottomrule
\end{tabular}
\end{table}

These semantic metrics provide deeper insights into code quality compared to syntactic measures, evaluating complexity via $\mathrm{CC}$, and vulnerability risks through $\mathrm{SS}$. Our framework prioritizes these indicators as they directly reflect long-term software health and operational reliability.

Another category is \textbf{code-level} metrics, inspired by Halstead complexity~\cite{hariprasad2017software}. We classify those metrics that are directly related to the characters of the source code itself into this level. Specifically, they include program length, volume, effort, and time. These metrics originate from the basic elements of source code and quantify the size, complexity, and development cost of the program. Although they are not as closely related to runtime behavior as semantic-level metrics, they still play an important role in evaluating structural characteristics and coding conventions, especially suitable for preliminary analysis and comparison of code quality.

\begin{table}[hbt]
  \caption{Evaluation error on other metrics.}
  \label{tab:commands}
  \scalebox{0.6}{
  \large
  \begin{tabular}{c|c|cccc|cccc}
    \toprule
     &Source  dataset& \multicolumn{4}{c}{Bigcode}& \multicolumn{4}{c}{Evo}\\
    \cmidrule{3-10}
    Metric level&	Codellama- &7b&	13b&	34b	& 70b&	7b&	13b& 	34b&	70b\\
    \midrule
    \textbf{Semantic}	
    &	SS&	-0.039 & -0.037 & -0.040 & -0.040&	0.043 & 0.040 & 0.043 & 0.041 \\
    & CC&	-0.046 & -0.045 & -0.028 & -0.028 &	0.040 & 0.045 & 0.026 & 0.025 \\
    &Codebleu&	0.011 & 0.007 & 0.003 & 0.011	&-0.019 & -0.016 & -0.007 & -0.015 \\
    \midrule
    \textbf{Code}&	Length &	-0.046 & -0.045 & -0.048 & -0.031&	0.079 & 0.076 & 0.090 & 0.107 \\
    &Volume	&-0.035 & -0.037 & -0.037 & -0.022&	0.050 & 0.054 & 0.060 & 0.080 \\
    &Effort&	-0.022 & -0.022 & -0.023 & -0.018&	0.029 & 0.030 & 0.028 & 0.050\\
    &Time&	-0.021 & -0.022 & -0.023 & -0.018&	0.026 & 0.028 & 0.032 & 0.045 \\
    \bottomrule
  \end{tabular}}
\end{table}

Halstead complexity is a software measurement method proposed by Maurice Halstead, which is based on the number of operators and operands in a program. It measures the complexity and development workload of a program from the character level of the code. In our work, we classify a set of indicators inspired by Halstead complexity as code-level metrics, which mainly include the following four core indicators:

\begin{itemize}
    \item Length (Program Length, $L$): This refers to the total number of all operators and operands in the program. The calculation formula is: $L = n_1 + n_2$,
    where $n_1$ represents the number of operators, and $n_2$ represents the number of operands.

    \item Volume (Program Volume, $V$): This indicates the total information content of the program, measured in "bits," reflecting the minimum mental effort required to understand the program. Its formula is:$V = L \times \log_2(n_1 + n_2)$.

    \item Effort (Effort, $E$): This is used to estimate the degree of effort required by programmers to write the program, with units that can be understood as the number of basic operations. Its calculation method is: $E = \left(\frac{n_1}{2} + n_2\right) \times V$.

    \item Time (Time, $T$): This is an estimate of the development time for the program, usually measured in seconds. The calculation formula is:$T = \frac{E}{18}$.
\end{itemize}
Our evaluation result on above metric is shown in \autoref{tab:commands}. The experimental results demonstrate that our framework excels at predicting performance on semantic-level metrics compared to code-level metrics. The max absolute errors (MAEs) for the predictions of security score and cyclomatic complexity were 4.3\%, and 4.6\%, respectively. In contrast, the performance on code-level metrics was relatively weaker, with MAEs of 10.7\%, 8\%, 5\%, and 4.5\% for the four indicators of code length, volume, effort, and time, respectively. We believe this discrepancy may be attributed to the direct influence of randomness at the token-level output of the LLM on code-level metrics. In contrast, semantic-level metrics encode higher-level information, making them more stable and reliable in terms of prediction performance.

\answer{2}{The predictive capability of our framework can be extended to various metrics; however, given the same dataset scale, our framework performs better on semantic-level metrics.}

\subsection{RQ3: Comparsion With Baselines}
In this section, we compare our framework with alternative approaches and demonstrate that our method achieves state-of-the-art performance. We first investigate whether different fitting distributions can improve performance under the importance sampling framework. The baseline methods employed for comparison include: Gaussian Mixture Models\cite{reynolds2015gaussian} (GMMs), Restricted Boltzmann Machines\cite{fischer2012introduction} (RBMs), Conditional Maximum Entropy Models\cite{mcdonald2009efficient}, and Variational Autoencoders (VAEs).

For the GMM implementation, we evaluate two distinct configurations: one with 8 components and another with 80 components. Regarding the RBMs and Conditional Maximum Entropy Models, considering the learning challenges in high-dimensional sparse embedding spaces, we utilize both the original 768-dimensional inputs and their reduced 128-dimensional counterparts.Our test results is shown in \autoref{tab:method1}. We also consider the average value of  absolute value(avg of abs) of error for each solution.

From the experimental results, we observe that our approach achieves the lowest prediction error among all compared schemes. Furthermore, it can be noted that the performance of both Restricted Boltzmann Machines and Conditional Maximum Entropy models remains virtually unchanged after applying PCA dimensionality reduction. This finding suggests that these two methods are capable of effectively learning from sparse samples in high-dimensional spaces for our specific task.

Next, we compare our framework with non-importance sampling approaches. Since our objective only requires obtaining importance weights for each sample from the source prompt set, we formulate this as a regression problem and employ commonly used regression models for comparison.

The baseline models we employ can be categorized into two groups: (1) machine learning models that provide interpretability, such as Random Forest Regression\cite{segal2004machine}(RSR), Linear Regression\cite{su2012linear}(LR), Decision Tree Regression\cite{xu2005decision}(DTR), and Ridge Regression\cite{mcdonald2009ridge}(RR); and (2) deep learning-based models, for which we adopt Multilayer Perceptron\cite{popescu2009multilayer} (MLP) and Recurrent Neural Network\cite{sherstinsky2020fundamentals} (RNN) as our baselines. For the RNN implementation, instead of using only the embedding corresponding to the BERT-generated \texttt{<cls>} token as input, we sequentially feed all token embeddings generated by the model into the RNN. For the MLP, we directly train the model on the source prompt set and subsequently employ the trained MLP to predict the performance of samples in the target prompt set.

Our experimental results are presented in \autoref{tab:method2}. As can be observed, the interpretable machine learning-based approaches significantly outperform the non-interpretable deep learning-based methods. This performance gap may be attributed to the fact that MLP and RNN architectures are prone to either overfitting or underfitting phenomena. Furthermore, our proposed method still achieves the best performance among all compared solutions.
\begin{table}[]
  \centering
  \caption{Evaulation error  on other method under importance sampling framework.}
  \label{tab:method1}
  \resizebox{0.48\textwidth}{!}{%
  \Large
    \begin{tabular}{c|cccc|cccc|c}
      \toprule
      Source & \multicolumn{4}{c}{Bigcode} & \multicolumn{4}{c}{Evo}  \\
      \cmidrule{2-10}
      LLM & 7b & 13b & 34b & 70b & 7b & 13b & 34b & 70b  & avg of abs\\
      \midrule
      GMM(80)   & -0.265 & -0.256 & -0.221 & -0.302 & -0.268 & -0.303 & -0.333 & -0.374 & 0.290  \\
      GMM(8)    & 0.426 & -0.350 & -0.373 & -0.411 & -0.400 & -0.403 & -0.405 & -0.511 &0.328 \\
      RBM(768)  & -0.002 & -0.019 & -0.032 & 0.012  & 0.008  & 0.025  & 0.038  & -0.007 & 0.018 \\
      RBM(128)  & -0.002 & -0.019 & -0.032 & 0.012  & 0.008  & 0.025  & 0.038  & -0.007 & 0.018 \\
      MaxEnt(768)& -0.005 & -0.022 & -0.034& 0.009  & 0.005  & 0.0224 & 0.034 & -0.009 & 0.017 \\
      MaxEnt(128)& -0.005 & -0.022 & -0.034 & 0.009  & 0.005  & 0.022 & 0.034 & -0.009 & 0.017 \\
      VAE       & -0.009& -0.026& -0.020 &  0.006  & -0.032 & -0.003&  0.0028 & -0.024 &0.015 \\
      \textbf{Ours}   &0.011 & 0.007 & 0.003 & 0.011	&-0.019 & -0.016 & -0.007 & -0.015 &\textbf{0.011} \\
      \bottomrule
    \end{tabular}
  }
\end{table}

\begin{table}[]
  \caption{Evaulation error  on other method that are not under importance sampling framework. Notation: Ex=Explainable, Unex=Unexplainable}
  \label{tab:method2}
  \scalebox{0.6}{
  \begin{tabular}{c|c|cccc|cccc|c}
    \toprule
    &Source  dataset& \multicolumn{4}{c}{Bigcode}& \multicolumn{4}{c}{Evo}\\
    \cmidrule{3-11}
    &	Codellama- &7b&	13b&	34b	& 70b&	7b&	13b& 	34b&	70b&avg of abs\\
    \midrule
   \textbf{Ex} 	& RSR &	-0.033&	-0.045&	-0.060&	0.015&	0.019&	0.019&	0.034&	-0.029&0.031  \\
    &	LR& -0.005&	-0.022&	-0.034&	\textbf{0.009}&	\textbf{0.005}&	0.022&	0.034&	-0.009&0.017 \\
    & DTR&-0.032&	-0.054&	-0.065&	-0.026&	0.019&	0.017&	0.044&	-0.018&0.035 \\
    &RR&\textbf{-0.005}&	-0.022	&-0.034&	0.009&	0.005&	0.022&	0.034&	\textbf{-0.009}&0.017\\
    & \textbf{Ours}      & 0.011 & \textbf{0.007} & \textbf{0.003} & 0.011	&-0.019 & \textbf{-0.016} & \textbf{-0.007} & -0.015 & \textbf{0.011} \\
    \midrule
    \textbf{Unex}&	RNN &0.285&	0.278	&0.317&	0.278&	0.331&	0.318&	0.324&	0.314& 0.298\\
    & MLP	&-0.108	&-0.133&	-0.138&	-0.132	&0.061&	0.096	&0.093	&0.060& 0.105 \\
    \bottomrule
  \end{tabular}}
\end{table}
\answer{3}{Our solution outperforms all baseline methods in terms of prediction accuracy. We also observe that machine learning and statistical learning-based approaches perform better than deep learning-based methods, which may indicate that there is still room for development in applying deep learning models to this task.}

\subsection{RQ4: Ablation Study}
In this section, we investigate whether other factors in our framework could potentially affect its performance. Specifically, we focus on examining four key aspects:(1) Feature dimensionality. (2) The size of the prompt set. (3) The number of importance-weighted autoencoder samples. (4) The percentage ratios of truncated weight.
\begin{table}[]
\centering
\caption{Evaluation error of our framework with PCA.}
\label{tab:rd}
\scalebox{0.7}{
\Large
\begin{tabular}{ccccccc}
\toprule
\textbf{Source} & \textbf{Dimension} & \textbf{576} & \textbf{384}& \textbf{192}  \\
\midrule
BigCode    & codellama-70b&0.013&	0.017&0.016 \\
       & codellama-34b & 0.006&0.093&0.011	\\
    & codellama-13b & 0.009&0.104& 0.019 \\
    & codellama-7b  & 0.019& -0.021&  0.022  \\
\midrule
Evo  & codellama-70b &-0.017 &-0.039&  -0.016 \\
    & codellama-34b  &-0.007 & -0.016&-0.007\\
    & codellama-13b & -0.014&0.020&-0.015 \\
    & codellama-7b  & -0.021&-0.003& -0.022	\\
\bottomrule
\end{tabular}
}
\end{table}

\paragraph{The feature dimensionality.}
As the embedding dimensionality continues to increase, the sparse distribution of samples in high-dimensional space may lead to the curse of dimensionality. However, dimensionality reduction of samples could potentially result in information loss within the embeddings. We employed Principal Component Analysis\cite{abdi2010principal} (PCA) for dimensionality reduction. We subsequently examined the performance of our method in different dimensionality.Experimental result in shown in \autoref{tab:rd}.

The experimental results demonstrate that the absolute error value exhibits a generally increasing trend with decreasing dimensionality. While this relationship is not strictly monotonic, a positive correlation can be observed between the dimension reduction level and the magnitude of the final prediction error. It is also worth noting that although linear layers are a common dimensionality reduction approach, employing linear layers for dimensionality reduction in our framework leads to a significant increase in error, as shown in \autoref{tab:rdln}.

\begin{table}[]
\centering
\caption{Evaluation error of our framework with linear layer.}
\label{tab:rdln}
\scalebox{0.8}{
\begin{tabular}{ccccccc}
\toprule
\textbf{Source} & \textbf{Dimension} &\textbf{576} & \textbf{384}\\
\midrule
BigCode    & codellama-70b & -0.410&	-0.410\\
       & codellama-34b &-0.264	&-0.265\\
    & codellama-13b & -0.349&	-0.350 \\
    & codellama-7b  &  -0.290&	-0.291\\
\midrule
Evo  & codellama-70b &  -0.391&	-0.392 \\
    & codellama-34b &  -0.275&	-0.276\\
    & codellama-13b &  -0.319&	-0.320\\
    & codellama-7b  & -0.291	&-0.292\\
\bottomrule
\end{tabular}}
\end{table}

\paragraph{The number of importance-weighted autoencoder samples.}
In IWAE, appropriately increasing the number of samples helps IWAE better approximate the target distribution. However, excessively increasing the sample size may lead to large variance, thereby affecting test stability. Therefore, adjusting the number of samples in IWAE can help strike a balance between stability and accuracy. We experimentally investigate the relationship between our framework's performance and the number of samples, as shown in the \autoref{tab:sample}. It can be observed that when the sampling number equals 1, cases may occur where the absolute error exceeds 4\%. On the other hand, with sampling numbers more than 25, instances of absolute errors surpassing 3.5\% were detected. This empirical evidence substantiates our prior hypothesis that both excessively large and insufficiently small sampling numbers may compromise the stability of prediction outcomes.
\begin{table}[htbp]
  \centering
  \caption{Evaluation error on our method with different number of samples in IWAE.}
  \label{tab:sample}
  \scalebox{0.6}{%
  \Large
    \begin{tabular}{c|cccc|cccc}
      \toprule
      Source & \multicolumn{4}{c}{Bigcode} & \multicolumn{4}{c}{Evo} \\
      \cmidrule{2-9}
      Codellama & 7b & 13b & 34b & 70b & 7b & 13b & 34b & 70b \\
      \midrule
      100 & -0.005 & -0.028 & -0.031 & 0.017 & 0.006 & 0.025 & 0.031 & -0.012 \\
50 & -0.026 & -0.025 & -0.022 & 0.014 & \textbf{0.002} & -0.026 & 0.035 & 0.009 \\
25 & \textbf{0.004} & -0.022 & -0.039 & 0.014 & -0.008 & 0.011 & 0.028 & -0.014 \\
\textbf{10} & 0.011 &\textbf{0.007} & \textbf{0.003} & \textbf{0.011}	&-0.019 & -0.016 & -0.007 & -0.015 \\ 5 &0.004&	-0.011&	0.009&	0.017&	-0.016&	\textbf{0.010}&	\textbf{0.005}&	-0.014  \\
1 & -0.003 & -0.041 & -0.018 & -0.050 & -0.010 & 0.009 & 0.029 & -0.038 \\
      \bottomrule
    \end{tabular}
  }
\end{table}

\begin{table}[]
  \centering
  \caption{Evaluation error on our method with different percentile of truncated weight in IWAE.}
  \label{tab:truncate}
  \scalebox{0.7}{%
    \begin{tabular}{c|cccc|cccc}
      \toprule
      Source & \multicolumn{4}{c}{Bigcode} & \multicolumn{4}{c}{Evo} \\
      \cmidrule{2-9}
      Codellama & 7b & 13b & 34b & 70b & 7b & 13b & 34b & 70b \\
      \midrule
     1  & 0.037 & 0.019 & -0.013 & 0.049 & -0.024 & \textbf{0.011} & 0.010& \textbf{-0.010} \\
       0.95 & -0.021 & -0.022 & -0.023 & -0.018 &0.026 & 0.028 & 0.032 & 0.015 \\
      \textbf{0.9} & \textbf{0.011} &\textbf{0.007} & \textbf{0.003} & \textbf{0.011}	&\textbf{-0.019} & -0.016 & \textbf{-0.007} & -0.015 \\
      0.85 & -0.022 & -0.022 & -0.023 & -0.017 & -0.028 & 0.028 & 0.029 & 0.052 \\
      0.8 & -0.021 & -0.022 & -0.023 & -0.018 & 0.025 & 0.028 & 0.030 & 0.053 \\
     
      \bottomrule
    \end{tabular}
  }
\end{table}

\begin{table*}[htbp]
\centering
\caption{Bigcode $\leftrightarrow$ EVO Score Pairs.}
\vspace{-1em}
\label{tab:bigcode_evo_scores_70b}
\scalebox{0.7}{
\Large
\begin{tabular}{c|c|ccccc}
\toprule
CodeLlama & Setting & Full & 700 & 500 & 300 & 100 \\
\cmidrule{1-7}
70b &  Full    & -0.011 / -0.014 & 0.028 / 0.016 & 0.005 / 0.0022 & -0.017 / -0.012 & 0.015 / -0.028 \\
34b &     & 0.003 / -0.007 & -0.008 / 0.054 & -0.061 / 0.032 & -0.054/0.037 & -0.029/0.019 \\
13b &     & -0.011 / -0.010 & -0.012 / 0.053 & 0.030 / 0.020 & -0.058 / 0.019 & -0.040 / 0.009 \\
7b &     & 0.011 / -0.019	& 0.015 / 0.031	& -0.004 / 0.007 &	 -0.033 / -0.001	& -0.019 / -0.006 \\
\cmidrule{1-7}
70b & 700     & -0.024 / -0.042 & -0.006 / -0.013 & 0.003 / 0.006 & 0.034 / -0.011 & 0.029 / -0.006 \\
34b &     & -0.062 / 0.042&	-0.034 / 0.028	&-0.054 / 0.062	&0.004 / 0.049&	-0.054 / 0.051 \\
13b &     & -0.035 / 0.014&	-0.016 / 0.028&	-0.042 / 0.062&	0.003 / 0.042&	-0.003 / 0.058 \\
7b &     & -0.038 / -0.005&	-0.013 / 0.006&	-0.023 / 0.022&	0.021 / 0.013&	0.038 / 0.025 \\
\cmidrule{1-7}
70b & 500     & -0.028 / -0.009 & -0.023 / -0.054 & 0.001 / -0.026 & -0.006 / -0.039 & -0.025 / -0.002 \\
34b &     & -0.042 / 0.078&	-0.060 / -0.026&	-0.038 / 0.034	&-0.018 / 0.021	&-0.055 / 0.049 \\
13b &     & 0.014 / 0.018	&-0.045 / -0.009&	-0.032 / 0.034&	-0.039 / 0.038&	-0.024 / 0.060 \\
7b &     & -0.005 / 0.018&	-0.029 / -0.016&	-0.017 / 0.003&	0.002 / -0.010&	-0.024 / -0.031 \\
\cmidrule{1-7}
70b & 300     & -0.056 / -0.054 & -0.007 / -0.033 & 0.012 / -0.017 & -0.033 / -0.039 & -0.050 / -0.069 \\
34b &     & -0.080 / 0.016&	-0.031 / 0.024&	-0.034 / 0.047&	-0.083 / 0.014&	-0.064 / 0.011 \\
13b &     & -0.030/-0.030&	-0.037/-0.001&	0.040/-0.005&	-0.019 / -0.056&	-0.012 / -0.035 \\
7b &     & -0.006 / -0.031 & -0.012 / 0.053 & 0.030 / 0.020 & -0.058 / 0.019 & -0.040 / 0.009 \\
\cmidrule{1-7}
70b & 100     & -0.094 / -0.009 & 0.033 / 0.049 & -0.107 / -0.038 & -0.104 / -0.030 & -0.062 / -0.079 \\
34b &     & -0.118 / 0.045&	-0.090 / 0.003&	-0.148 / -0.011	&-0.026 / 0.037&	-0.173 / 0.058\\
13b &     & -0.098 / 0.040&	0.017 / 0.030&	-0.07 / -0.005&	-0.071 / 0.033&	0.100 / 0.092 \\
7b &     & -0.086 / -0.020&	-0.090 / 0.029&	-0.045 / -0.034&	-0.096 / -0.071&	0.147 / -0.090 \\
\bottomrule
\end{tabular}}
\end{table*} 

\paragraph{The percentage ratios of truncated weight.}
During importance sampling, when the proposal distribution $q(x)$ exhibits limited overlap with the high-probability regions of the target distribution $p(x)$, extreme importance weights may occur. This can lead to a situation where a minority of samples dominate the prediction, potentially compromising the stability of the estimation. To enhance the robustness of our predictions, we employ a percentile-based weight truncation strategy.

Specifically, we truncate all weights exceeding the $x$-th percentile to the value of the $x$-th percentile weight, followed by a normalization step to prevent underestimation. Let $w_{(1)}, w_{(2)}, ..., w_{(n)}$ denote the ordered importance weights, and $w_{(k)}$ be the weight at the $x$-th percentile. The truncated weights $\tilde{w}_i$ are computed as: $\tilde{w}i = \min(w_i, w{(k)})$.
The normalized weights are then given by:$\bar{w}_i = \frac{\tilde{w}i}{\sum{\tilde{w}_j} }$.

It should be noted that excessive weight truncation may introduce bias into the original importance sampling estimator, as the unbiasedness may no longer hold after truncation. Therefore, the choice of the truncation percentile $x$ requires careful consideration to balance between variance reduction and bias introduction. We investigate the relationship between the percentile of truncated weights and the final performance in our framework, as presented in the \autoref{tab:truncate}. From the table, it can be observed that extreme values tend to emerge when the truncation weight percentile is either too large or too small. We select $x = 0.9$ as the truncation percentile.

\paragraph{The size of the prompt set.}
Since our framework approximates the expectation by summing the sample scores, the number of samples may also become a factor affecting the performance of our framework. An insufficiently large prompt set size could lead to unstable predictions. To verify this hypothesis, we design five different settings for two prompt sets: \textit{full}, 700, 500, 300, and 100, representing using the complete dataset or randomly sampled subsets of 700, 500, 300, and 100 data points respectively. Under these five configurations, we conduct mutual prediction between two datasets to observe how the prediction results vary with different sample sizes.

We present the testing results in Table~\ref{tab:bigcode_evo_scores_70b}. Each cell in the table contains two numbers separated by a forward slash (/). The value on the left side of the slash originates from the BigCode-predicts-Evo dataset, while the value on the right represents the inverse scenario. As demonstrated, the reduction in size for either the target or source prompt set leads to a significant performance degradation. Although our experimental dataset sizes are typically on the order of thousands, we recommend using larger datasets in practical applications. 
 
\answer{4}{We observe that the choice of dimensionality reduction method and the resulting reduced dimension both affect the performance of our framework. Statistical approaches, such as PCA, demonstrate significantly better results compared to neural network-based methods. Additionally, as the reduced dimension becomes lower, the error of our framework increases. In practical applications, truncating the weight percentiles and the number of IWAE samples also influence the performance of our framework.}
\section{Discussion}
\label{sec:discussion}
Our work represents the first exploration of a benchmark-free approach in the domain of code tasks. In this section, we identify several promising avenues for future research:

\textbf{Anti Data contamination:} Data contamination\cite{fu2024does} occurs when open-source test suites and their accompanying reference solutions are incorporated into the LLM's training data, causing validity of benchmark degrade over time\cite{li2024task}. This issue is not unique to code benchmarks\cite{zhou2025lessleak} but in many other areas\cite{gupta2024changing,liu2023tinygsm};This phenomenon can lead to artificially inflated performance on the test set, with another study indicating that LLM scores on contaminated samples can be up to five times higher than on uncontaminated samples\cite{zhou2025lessleak}. Since our framework does not need test suite and reference answer, we can significantly reduce the risk of data contamination.

 \textbf{Exploration of Cross-Scenario and Cross-Language Code Testing:} Although our evaluation utilized two distinct benchmarks, they correspond to largely consistent coding scenarios. Testing across significantly divergent scenarios and languages remains unexplored in this work. We posit this as a highly valuable future direction. Another critical aspect involves scenarios where prompt distributions exhibit minimal overlap, causing the computed weights might disproportionately favor a minority of samples. Investigating methods to mitigate this issue is crucial for extending the applicability of our framework to broader contexts.

\textbf{Leveraging Closed-Source Benchmarks for Anti-Cheating Verification:} As discussed, existing benchmarks are often compromised by data contamination. However, employing closed-source benchmarks within our framework—where the source benchmark remains inaccessible for learning—could enable the reverse testing of potential cheating by existing LLMs on various benchmarks. This is feasible because, when the source benchmark is free from data contamination, our framework can correspondingly compute the actual performance of LLMs under uncontaminated conditions.

\section{Conclusion}
\label{sec:conclusion}
This paper introduces an innovative approach \approach{} that combines \emph{Importance Sampling} with \emph{Importance Weighted Autoencoders (IWAE)} to evaluate the performance of LLM on code generation tasks without requiring human annotators or LLMs as judges. This approach provides new perspectives for automatic LLM evaluation and can reduce the cost of high-quality benchmark construction.

Future work will focus on further optimizing the IWAE architecture and exploring better approaches to handle out-of-distribution samples. Meanwhile, considering that real-world code generation tasks often involve complex interactive processes, how to evaluate LLM performance in dynamic environments in real-time represents another valuable direction for in-depth investigation.

In conclusion, this study provides a novel pathway for the automated and efficient evaluation of LLMs in code generation, contributing to advancements in the field of artificial intelligence. With technological progress, we anticipate seeing more innovative applications emerge based on this framework.

\section{Acknowledgement}
This research is supported by the Ministry of Education,
Singapore under its Academic Research Fund Tier 3 (Award
ID: MOET32020-0004). Any opinions, findings and conclusions or recommendations expressed in this material are those
of the author(s) and do not reflect the views of the Ministry
of Education, Singapore.

\bibliographystyle{ACM-Reference-Format}
\bibliography{sample-base}


\begin{thebibliography}{39}


\ifx \showCODEN    \undefined \def \showCODEN     #1{\unskip}     \fi
\ifx \showISBNx    \undefined \def \showISBNx     #1{\unskip}     \fi
\ifx \showISBNxiii \undefined \def \showISBNxiii  #1{\unskip}     \fi
\ifx \showISSN     \undefined \def \showISSN      #1{\unskip}     \fi
\ifx \showLCCN     \undefined \def \showLCCN      #1{\unskip}     \fi
\ifx \shownote     \undefined \def \shownote      #1{#1}          \fi
\ifx \showarticletitle \undefined \def \showarticletitle #1{#1}   \fi
\ifx \showURL      \undefined \def \showURL       {\relax}        \fi
\providecommand\bibfield[2]{#2}
\providecommand\bibinfo[2]{#2}
\providecommand\natexlab[1]{#1}
\providecommand\showeprint[2][]{arXiv:#2}

\bibitem[Abdi and Williams(2010)]%
        {abdi2010principal}
\bibfield{author}{\bibinfo{person}{Herv{\'e} Abdi} {and} \bibinfo{person}{Lynne~J Williams}.} \bibinfo{year}{2010}\natexlab{}.
\newblock \showarticletitle{Principal component analysis}.
\newblock \bibinfo{journal}{\emph{Wiley interdisciplinary reviews: computational statistics}} \bibinfo{volume}{2}, \bibinfo{number}{4} (\bibinfo{year}{2010}), \bibinfo{pages}{433--459}.
\newblock


\bibitem[Blei et~al\mbox{.}(2017)]%
        {blei2017variational}
\bibfield{author}{\bibinfo{person}{David~M Blei}, \bibinfo{person}{Alp Kucukelbir}, {and} \bibinfo{person}{Jon~D McAuliffe}.} \bibinfo{year}{2017}\natexlab{}.
\newblock \showarticletitle{Variational inference: A review for statisticians}.
\newblock \bibinfo{journal}{\emph{Journal of the American statistical Association}} \bibinfo{volume}{112}, \bibinfo{number}{518} (\bibinfo{year}{2017}), \bibinfo{pages}{859--877}.
\newblock


\bibitem[Burda et~al\mbox{.}(2015)]%
        {burda2015importance}
\bibfield{author}{\bibinfo{person}{Yuri Burda}, \bibinfo{person}{Roger Grosse}, {and} \bibinfo{person}{Ruslan Salakhutdinov}.} \bibinfo{year}{2015}\natexlab{}.
\newblock \showarticletitle{Importance weighted autoencoders}.
\newblock \bibinfo{journal}{\emph{arXiv preprint arXiv:1509.00519}} (\bibinfo{year}{2015}).
\newblock


\bibitem[Chen et~al\mbox{.}(2024)]%
        {chen2024survey}
\bibfield{author}{\bibinfo{person}{Liguo Chen}, \bibinfo{person}{Qi Guo}, \bibinfo{person}{Hongrui Jia}, \bibinfo{person}{Zhengran Zeng}, \bibinfo{person}{Xin Wang}, \bibinfo{person}{Yijiang Xu}, \bibinfo{person}{Jian Wu}, \bibinfo{person}{Yidong Wang}, \bibinfo{person}{Qing Gao}, \bibinfo{person}{Jindong Wang}, {et~al\mbox{.}}} \bibinfo{year}{2024}\natexlab{}.
\newblock \showarticletitle{A survey on evaluating large language models in code generation tasks}.
\newblock \bibinfo{journal}{\emph{arXiv preprint arXiv:2408.16498}} (\bibinfo{year}{2024}).
\newblock


\bibitem[Chen et~al\mbox{.}(2021)]%
        {chen2021evaluating}
\bibfield{author}{\bibinfo{person}{Mark Chen}, \bibinfo{person}{Jerry Tworek}, \bibinfo{person}{Heewoo Jun}, \bibinfo{person}{Qiming Yuan}, \bibinfo{person}{Henrique Ponde De~Oliveira Pinto}, \bibinfo{person}{Jared Kaplan}, \bibinfo{person}{Harri Edwards}, \bibinfo{person}{Yuri Burda}, \bibinfo{person}{Nicholas Joseph}, \bibinfo{person}{Greg Brockman}, {et~al\mbox{.}}} \bibinfo{year}{2021}\natexlab{}.
\newblock \showarticletitle{Evaluating large language models trained on code}.
\newblock \bibinfo{journal}{\emph{arXiv preprint arXiv:2107.03374}} (\bibinfo{year}{2021}).
\newblock


\bibitem[Du et~al\mbox{.}(2025)]%
        {du2025swe}
\bibfield{author}{\bibinfo{person}{Yaxin Du}, \bibinfo{person}{Yuzhu Cai}, \bibinfo{person}{Yifan Zhou}, \bibinfo{person}{Cheng Wang}, \bibinfo{person}{Yu Qian}, \bibinfo{person}{Xianghe Pang}, \bibinfo{person}{Qian Liu}, \bibinfo{person}{Yue Hu}, {and} \bibinfo{person}{Siheng Chen}.} \bibinfo{year}{2025}\natexlab{}.
\newblock \showarticletitle{SWE-Dev: Evaluating and Training Autonomous Feature-Driven Software Development}.
\newblock \bibinfo{journal}{\emph{arXiv preprint arXiv:2505.16975}} (\bibinfo{year}{2025}).
\newblock


\bibitem[Ebert et~al\mbox{.}(2016)]%
        {ebert2016cyclomatic}
\bibfield{author}{\bibinfo{person}{Christof Ebert}, \bibinfo{person}{James Cain}, \bibinfo{person}{Giuliano Antoniol}, \bibinfo{person}{Steve Counsell}, {and} \bibinfo{person}{Phillip Laplante}.} \bibinfo{year}{2016}\natexlab{}.
\newblock \showarticletitle{Cyclomatic complexity}.
\newblock \bibinfo{journal}{\emph{IEEE software}} \bibinfo{volume}{33}, \bibinfo{number}{6} (\bibinfo{year}{2016}), \bibinfo{pages}{27--29}.
\newblock


\bibitem[Fischer and Igel(2012)]%
        {fischer2012introduction}
\bibfield{author}{\bibinfo{person}{Asja Fischer} {and} \bibinfo{person}{Christian Igel}.} \bibinfo{year}{2012}\natexlab{}.
\newblock \showarticletitle{An introduction to restricted Boltzmann machines}. In \bibinfo{booktitle}{\emph{Iberoamerican congress on pattern recognition}}. Springer, \bibinfo{pages}{14--36}.
\newblock


\bibitem[Fu et~al\mbox{.}(2024)]%
        {fu2024does}
\bibfield{author}{\bibinfo{person}{Yujuan Fu}, \bibinfo{person}{Ozlem Uzuner}, \bibinfo{person}{Meliha Yetisgen}, {and} \bibinfo{person}{Fei Xia}.} \bibinfo{year}{2024}\natexlab{}.
\newblock \showarticletitle{Does Data Contamination Detection Work (Well) for LLMs? A Survey and Evaluation on Detection Assumptions}.
\newblock \bibinfo{journal}{\emph{arXiv preprint arXiv:2410.18966}} (\bibinfo{year}{2024}).
\newblock


\bibitem[Fujimoto et~al\mbox{.}(2019)]%
        {fujimoto2019off}
\bibfield{author}{\bibinfo{person}{Scott Fujimoto}, \bibinfo{person}{David Meger}, {and} \bibinfo{person}{Doina Precup}.} \bibinfo{year}{2019}\natexlab{}.
\newblock \showarticletitle{Off-policy deep reinforcement learning without exploration}. In \bibinfo{booktitle}{\emph{International conference on machine learning}}. PMLR, \bibinfo{pages}{2052--2062}.
\newblock


\bibitem[Gupta et~al\mbox{.}(2024)]%
        {gupta2024changing}
\bibfield{author}{\bibinfo{person}{Vipul Gupta}, \bibinfo{person}{David Pantoja}, \bibinfo{person}{Candace Ross}, \bibinfo{person}{Adina Williams}, {and} \bibinfo{person}{Megan Ung}.} \bibinfo{year}{2024}\natexlab{}.
\newblock \showarticletitle{Changing answer order can decrease mmlu accuracy}.
\newblock \bibinfo{journal}{\emph{arXiv preprint arXiv:2406.19470}} (\bibinfo{year}{2024}).
\newblock


\bibitem[Hariprasad et~al\mbox{.}(2017)]%
        {hariprasad2017software}
\bibfield{author}{\bibinfo{person}{T Hariprasad}, \bibinfo{person}{G Vidhyagaran}, \bibinfo{person}{K Seenu}, {and} \bibinfo{person}{Chandrasegar Thirumalai}.} \bibinfo{year}{2017}\natexlab{}.
\newblock \showarticletitle{Software complexity analysis using halstead metrics}. In \bibinfo{booktitle}{\emph{2017 international conference on trends in electronics and informatics (ICEI)}}. IEEE, \bibinfo{pages}{1109--1113}.
\newblock


\bibitem[Jain et~al\mbox{.}(2024)]%
        {jain2024testgeneval}
\bibfield{author}{\bibinfo{person}{Kush Jain}, \bibinfo{person}{Gabriel Synnaeve}, {and} \bibinfo{person}{Baptiste Rozi{\`e}re}.} \bibinfo{year}{2024}\natexlab{}.
\newblock \showarticletitle{Testgeneval: A real world unit test generation and test completion benchmark}.
\newblock \bibinfo{journal}{\emph{arXiv preprint arXiv:2410.00752}} (\bibinfo{year}{2024}).
\newblock


\bibitem[James(1980)]%
        {james1980monte}
\bibfield{author}{\bibinfo{person}{Frederick James}.} \bibinfo{year}{1980}\natexlab{}.
\newblock \showarticletitle{Monte Carlo theory and practice}.
\newblock \bibinfo{journal}{\emph{Reports on progress in Physics}} \bibinfo{volume}{43}, \bibinfo{number}{9} (\bibinfo{year}{1980}), \bibinfo{pages}{1145}.
\newblock


\bibitem[Jimenez et~al\mbox{.}(2023)]%
        {jimenez2023swe}
\bibfield{author}{\bibinfo{person}{Carlos~E Jimenez}, \bibinfo{person}{John Yang}, \bibinfo{person}{Alexander Wettig}, \bibinfo{person}{Shunyu Yao}, \bibinfo{person}{Kexin Pei}, \bibinfo{person}{Ofir Press}, {and} \bibinfo{person}{Karthik Narasimhan}.} \bibinfo{year}{2023}\natexlab{}.
\newblock \showarticletitle{Swe-bench: Can language models resolve real-world github issues?}
\newblock \bibinfo{journal}{\emph{arXiv preprint arXiv:2310.06770}} (\bibinfo{year}{2023}).
\newblock


\bibitem[Kapustin et~al\mbox{.}(2023)]%
        {kapustin2023static}
\bibfield{author}{\bibinfo{person}{DA Kapustin}, \bibinfo{person}{VV Shvyrov}, {and} \bibinfo{person}{TI Shulika}.} \bibinfo{year}{2023}\natexlab{}.
\newblock \showarticletitle{Static analysis of corpus of source codes of python applications}.
\newblock \bibinfo{journal}{\emph{Programming and Computer Software}} \bibinfo{volume}{49}, \bibinfo{number}{4} (\bibinfo{year}{2023}), \bibinfo{pages}{302--309}.
\newblock


\bibitem[Levine et~al\mbox{.}(2020)]%
        {levine2020offline}
\bibfield{author}{\bibinfo{person}{Sergey Levine}, \bibinfo{person}{Aviral Kumar}, \bibinfo{person}{George Tucker}, {and} \bibinfo{person}{Justin Fu}.} \bibinfo{year}{2020}\natexlab{}.
\newblock \showarticletitle{Offline reinforcement learning: Tutorial, review, and perspectives on open problems}.
\newblock \bibinfo{journal}{\emph{arXiv preprint arXiv:2005.01643}} (\bibinfo{year}{2020}).
\newblock


\bibitem[Li and Flanigan(2024)]%
        {li2024task}
\bibfield{author}{\bibinfo{person}{Changmao Li} {and} \bibinfo{person}{Jeffrey Flanigan}.} \bibinfo{year}{2024}\natexlab{}.
\newblock \showarticletitle{Task contamination: Language models may not be few-shot anymore}. In \bibinfo{booktitle}{\emph{Proceedings of the AAAI Conference on Artificial Intelligence}}, Vol.~\bibinfo{volume}{38}. \bibinfo{pages}{18471--18480}.
\newblock


\bibitem[Liu et~al\mbox{.}(2023)]%
        {liu2023tinygsm}
\bibfield{author}{\bibinfo{person}{Bingbin Liu}, \bibinfo{person}{Sebastien Bubeck}, \bibinfo{person}{Ronen Eldan}, \bibinfo{person}{Janardhan Kulkarni}, \bibinfo{person}{Yuanzhi Li}, \bibinfo{person}{Anh Nguyen}, \bibinfo{person}{Rachel Ward}, {and} \bibinfo{person}{Yi Zhang}.} \bibinfo{year}{2023}\natexlab{}.
\newblock \showarticletitle{Tinygsm: achieving> 80\% on gsm8k with small language models}.
\newblock \bibinfo{journal}{\emph{arXiv preprint arXiv:2312.09241}} (\bibinfo{year}{2023}).
\newblock


\bibitem[McDonald(2009)]%
        {mcdonald2009ridge}
\bibfield{author}{\bibinfo{person}{Gary~C McDonald}.} \bibinfo{year}{2009}\natexlab{}.
\newblock \showarticletitle{Ridge regression}.
\newblock \bibinfo{journal}{\emph{Wiley Interdisciplinary Reviews: Computational Statistics}} \bibinfo{volume}{1}, \bibinfo{number}{1} (\bibinfo{year}{2009}), \bibinfo{pages}{93--100}.
\newblock


\bibitem[Mcdonald et~al\mbox{.}(2009)]%
        {mcdonald2009efficient}
\bibfield{author}{\bibinfo{person}{Ryan Mcdonald}, \bibinfo{person}{Mehryar Mohri}, \bibinfo{person}{Nathan Silberman}, \bibinfo{person}{Dan Walker}, {and} \bibinfo{person}{Gideon Mann}.} \bibinfo{year}{2009}\natexlab{}.
\newblock \showarticletitle{Efficient large-scale distributed training of conditional maximum entropy models}.
\newblock \bibinfo{journal}{\emph{Advances in neural information processing systems}}  \bibinfo{volume}{22} (\bibinfo{year}{2009}).
\newblock


\bibitem[Munos et~al\mbox{.}(2016)]%
        {munos2016safe}
\bibfield{author}{\bibinfo{person}{R{\'e}mi Munos}, \bibinfo{person}{Tom Stepleton}, \bibinfo{person}{Anna Harutyunyan}, {and} \bibinfo{person}{Marc Bellemare}.} \bibinfo{year}{2016}\natexlab{}.
\newblock \showarticletitle{Safe and efficient off-policy reinforcement learning}.
\newblock \bibinfo{journal}{\emph{Advances in neural information processing systems}}  \bibinfo{volume}{29} (\bibinfo{year}{2016}).
\newblock


\bibitem[Pinheiro~Cinelli et~al\mbox{.}(2021)]%
        {pinheiro2021variational}
\bibfield{author}{\bibinfo{person}{Lucas Pinheiro~Cinelli}, \bibinfo{person}{Matheus Ara{\'u}jo~Marins}, \bibinfo{person}{Eduardo~Ant{\'u}nio Barros~da Silva}, {and} \bibinfo{person}{S{\'e}rgio Lima~Netto}.} \bibinfo{year}{2021}\natexlab{}.
\newblock \showarticletitle{Variational autoencoder}.
\newblock In \bibinfo{booktitle}{\emph{Variational methods for machine learning with applications to deep networks}}. \bibinfo{publisher}{Springer}, \bibinfo{pages}{111--149}.
\newblock


\bibitem[Popescu et~al\mbox{.}(2009)]%
        {popescu2009multilayer}
\bibfield{author}{\bibinfo{person}{Marius-Constantin Popescu}, \bibinfo{person}{Valentina~E Balas}, \bibinfo{person}{Liliana Perescu-Popescu}, {and} \bibinfo{person}{Nikos Mastorakis}.} \bibinfo{year}{2009}\natexlab{}.
\newblock \showarticletitle{Multilayer perceptron and neural networks}.
\newblock \bibinfo{journal}{\emph{WSEAS Transactions on Circuits and Systems}} \bibinfo{volume}{8}, \bibinfo{number}{7} (\bibinfo{year}{2009}), \bibinfo{pages}{579--588}.
\newblock


\bibitem[Ren et~al\mbox{.}(2020)]%
        {ren2020codebleumethodautomaticevaluation}
\bibfield{author}{\bibinfo{person}{Shuo Ren}, \bibinfo{person}{Daya Guo}, \bibinfo{person}{Shuai Lu}, \bibinfo{person}{Long Zhou}, \bibinfo{person}{Shujie Liu}, \bibinfo{person}{Duyu Tang}, \bibinfo{person}{Neel Sundaresan}, \bibinfo{person}{Ming Zhou}, \bibinfo{person}{Ambrosio Blanco}, {and} \bibinfo{person}{Shuai Ma}.} \bibinfo{year}{2020}\natexlab{}.
\newblock \bibinfo{title}{CodeBLEU: a Method for Automatic Evaluation of Code Synthesis}.
\newblock
\showeprint[arxiv]{2009.10297}~[cs.SE]
\urldef\tempurl%
\url{https://arxiv.org/abs/2009.10297}
\showURL{%
\tempurl}


\bibitem[Reynolds(2015)]%
        {reynolds2015gaussian}
\bibfield{author}{\bibinfo{person}{Douglas Reynolds}.} \bibinfo{year}{2015}\natexlab{}.
\newblock \showarticletitle{Gaussian mixture models}.
\newblock In \bibinfo{booktitle}{\emph{Encyclopedia of biometrics}}. \bibinfo{publisher}{Springer}, \bibinfo{pages}{827--832}.
\newblock


\bibitem[Roziere et~al\mbox{.}(2023)]%
        {roziere2023code}
\bibfield{author}{\bibinfo{person}{Baptiste Roziere}, \bibinfo{person}{Jonas Gehring}, \bibinfo{person}{Fabian Gloeckle}, \bibinfo{person}{Sten Sootla}, \bibinfo{person}{Itai Gat}, \bibinfo{person}{Xiaoqing~Ellen Tan}, \bibinfo{person}{Yossi Adi}, \bibinfo{person}{Jingyu Liu}, \bibinfo{person}{Romain Sauvestre}, \bibinfo{person}{Tal Remez}, {et~al\mbox{.}}} \bibinfo{year}{2023}\natexlab{}.
\newblock \showarticletitle{Code llama: Open foundation models for code}.
\newblock \bibinfo{journal}{\emph{arXiv preprint arXiv:2308.12950}} (\bibinfo{year}{2023}).
\newblock


\bibitem[Segal(2004)]%
        {segal2004machine}
\bibfield{author}{\bibinfo{person}{Mark~R Segal}.} \bibinfo{year}{2004}\natexlab{}.
\newblock \showarticletitle{Machine learning benchmarks and random forest regression}.
\newblock  (\bibinfo{year}{2004}).
\newblock


\bibitem[Sherstinsky(2020)]%
        {sherstinsky2020fundamentals}
\bibfield{author}{\bibinfo{person}{Alex Sherstinsky}.} \bibinfo{year}{2020}\natexlab{}.
\newblock \showarticletitle{Fundamentals of recurrent neural network (RNN) and long short-term memory (LSTM) network}.
\newblock \bibinfo{journal}{\emph{Physica D: Nonlinear Phenomena}}  \bibinfo{volume}{404} (\bibinfo{year}{2020}), \bibinfo{pages}{132306}.
\newblock


\bibitem[Su et~al\mbox{.}(2012)]%
        {su2012linear}
\bibfield{author}{\bibinfo{person}{Xiaogang Su}, \bibinfo{person}{Xin Yan}, {and} \bibinfo{person}{Chih-Ling Tsai}.} \bibinfo{year}{2012}\natexlab{}.
\newblock \showarticletitle{Linear regression}.
\newblock \bibinfo{journal}{\emph{Wiley Interdisciplinary Reviews: Computational Statistics}} \bibinfo{volume}{4}, \bibinfo{number}{3} (\bibinfo{year}{2012}), \bibinfo{pages}{275--294}.
\newblock


\bibitem[Tokdar and Kass(2010)]%
        {tokdar2010importance}
\bibfield{author}{\bibinfo{person}{Surya~T Tokdar} {and} \bibinfo{person}{Robert~E Kass}.} \bibinfo{year}{2010}\natexlab{}.
\newblock \showarticletitle{Importance sampling: a review}.
\newblock \bibinfo{journal}{\emph{Wiley Interdisciplinary Reviews: Computational Statistics}} \bibinfo{volume}{2}, \bibinfo{number}{1} (\bibinfo{year}{2010}), \bibinfo{pages}{54--60}.
\newblock


\bibitem[Uehara et~al\mbox{.}(2022)]%
        {uehara2022review}
\bibfield{author}{\bibinfo{person}{Masatoshi Uehara}, \bibinfo{person}{Chengchun Shi}, {and} \bibinfo{person}{Nathan Kallus}.} \bibinfo{year}{2022}\natexlab{}.
\newblock \showarticletitle{A review of off-policy evaluation in reinforcement learning}.
\newblock \bibinfo{journal}{\emph{arXiv preprint arXiv:2212.06355}} (\bibinfo{year}{2022}).
\newblock


\bibitem[Van~Erven and Harremos(2014)]%
        {van2014renyi}
\bibfield{author}{\bibinfo{person}{Tim Van~Erven} {and} \bibinfo{person}{Peter Harremos}.} \bibinfo{year}{2014}\natexlab{}.
\newblock \showarticletitle{R{\'e}nyi divergence and Kullback-Leibler divergence}.
\newblock \bibinfo{journal}{\emph{IEEE Transactions on Information Theory}} \bibinfo{volume}{60}, \bibinfo{number}{7} (\bibinfo{year}{2014}), \bibinfo{pages}{3797--3820}.
\newblock


\bibitem[Welker(2001)]%
        {welker2001software}
\bibfield{author}{\bibinfo{person}{Kurt~D Welker}.} \bibinfo{year}{2001}\natexlab{}.
\newblock \showarticletitle{The software maintainability index revisited}.
\newblock \bibinfo{journal}{\emph{CrossTalk}}  \bibinfo{volume}{14} (\bibinfo{year}{2001}), \bibinfo{pages}{18--21}.
\newblock


\bibitem[Wijayakoon et~al\mbox{.}(2025)]%
        {wijayakoon2025legal}
\bibfield{author}{\bibinfo{person}{Chanuka Wijayakoon}, \bibinfo{person}{Hai Dong}, \bibinfo{person}{HMN Bandara}, \bibinfo{person}{Zahir Tari}, {and} \bibinfo{person}{Anurag Soin}.} \bibinfo{year}{2025}\natexlab{}.
\newblock \showarticletitle{Legal Compliance Evaluation of Smart Contracts Generated By Large Language Models}.
\newblock \bibinfo{journal}{\emph{arXiv preprint arXiv:2506.00943}} (\bibinfo{year}{2025}).
\newblock


\bibitem[Xia et~al\mbox{.}(2024)]%
        {xia2024top}
\bibfield{author}{\bibinfo{person}{Chunqiu~Steven Xia}, \bibinfo{person}{Yinlin Deng}, {and} \bibinfo{person}{Lingming Zhang}.} \bibinfo{year}{2024}\natexlab{}.
\newblock \showarticletitle{Top leaderboard ranking= top coding proficiency, always? evoeval: Evolving coding benchmarks via llm}.
\newblock \bibinfo{journal}{\emph{arXiv preprint arXiv:2403.19114}} (\bibinfo{year}{2024}).
\newblock


\bibitem[Xu et~al\mbox{.}(2005)]%
        {xu2005decision}
\bibfield{author}{\bibinfo{person}{Min Xu}, \bibinfo{person}{Pakorn Watanachaturaporn}, \bibinfo{person}{Pramod~K Varshney}, {and} \bibinfo{person}{Manoj~K Arora}.} \bibinfo{year}{2005}\natexlab{}.
\newblock \showarticletitle{Decision tree regression for soft classification of remote sensing data}.
\newblock \bibinfo{journal}{\emph{Remote Sensing of Environment}} \bibinfo{volume}{97}, \bibinfo{number}{3} (\bibinfo{year}{2005}), \bibinfo{pages}{322--336}.
\newblock


\bibitem[Zhou et~al\mbox{.}(2025)]%
        {zhou2025lessleak}
\bibfield{author}{\bibinfo{person}{Xin Zhou}, \bibinfo{person}{Martin Weyssow}, \bibinfo{person}{Ratnadira Widyasari}, \bibinfo{person}{Ting Zhang}, \bibinfo{person}{Junda He}, \bibinfo{person}{Yunbo Lyu}, \bibinfo{person}{Jianming Chang}, \bibinfo{person}{Beiqi Zhang}, \bibinfo{person}{Dan Huang}, {and} \bibinfo{person}{David Lo}.} \bibinfo{year}{2025}\natexlab{}.
\newblock \showarticletitle{LessLeak-Bench: A First Investigation of Data Leakage in LLMs Across 83 Software Engineering Benchmarks}.
\newblock \bibinfo{journal}{\emph{arXiv preprint arXiv:2502.06215}} (\bibinfo{year}{2025}).
\newblock


\bibitem[Zhuo et~al\mbox{.}(2024)]%
        {zhuo2024bigcodebench}
\bibfield{author}{\bibinfo{person}{Terry~Yue Zhuo}, \bibinfo{person}{Minh~Chien Vu}, \bibinfo{person}{Jenny Chim}, \bibinfo{person}{Han Hu}, \bibinfo{person}{Wenhao Yu}, \bibinfo{person}{Ratnadira Widyasari}, \bibinfo{person}{Imam Nur~Bani Yusuf}, \bibinfo{person}{Haolan Zhan}, \bibinfo{person}{Junda He}, \bibinfo{person}{Indraneil Paul}, {et~al\mbox{.}}} \bibinfo{year}{2024}\natexlab{}.
\newblock \showarticletitle{Bigcodebench: Benchmarking code generation with diverse function calls and complex instructions}.
\newblock \bibinfo{journal}{\emph{arXiv preprint arXiv:2406.15877}} (\bibinfo{year}{2024}).
\newblock


\end{thebibliography}

\end{document}